\documentclass[runningheads]{llncs}
\usepackage[T1]{fontenc}
\usepackage{graphicx}
\usepackage{booktabs}
\usepackage[misc]{ifsym}
\usepackage{amssymb}
\usepackage{amsmath}
\usepackage{hyperref}
\usepackage{cleveref}
\newcommand{\corr}{(\Letter)}
\usepackage{mwe}
\usepackage{booktabs}
\usepackage{multirow}
\usepackage{subcaption}
\usepackage[table,dvipsnames]{xcolor}
\usepackage{import}

\begin{document}

\title{Conditional Attribution for Root Cause Analysis in Time-Series Anomaly Detection}

\titlerunning{Conditional Attribution for Time-Series RCA}
% If the full title of your paper is short enough to also fit in the running head, you can omit the abbreviated paper title here. You can check as follows: if you comment out the \titlerunning line, something will appear in the header of all odd-numbered pages of your PDF from page 3 onward. This something is either the full title (in which case all is well), or the error message "Title Suppressed Due to Excessive Length". If this error message appears, you're going to want to provide an abbreviated title within the \titlerunning command, because if you won't do it, Springer will do it for you.

%N.B.: Author information (both in the \author{} and \authorrunning{} command) should only be present in the Camera-Ready Version of your paper. The version that you initially submit for review, ought to be double-blind. So, when initially submitting your paper, use:
%\author{Author information scrubbed for double-blind reviewing}

\author{Shashank Mishra\inst{1} \corr \and
Karan Patil\inst{1,3} \and Cedric Schockaert\inst{2} \and
Didier Stricker\inst{1,3} \and Jason Rambach\inst{1}}

% You may leave out the orcidID information, if you want to.
% Use \corr to indicate the corresponding author. Note the spacing around the \corr command. Only one author can be the corresponding author.

%N.B.: comment out the \authorrunning{} command for the double-blind version of your paper submitted for review. Later, if your paper is accepted, use the command for the Camera-Ready Version.
\authorrunning{S. Mishra et al.}
% First names are abbreviated in the running head.
% If there is one author, write 'A.L. Benjamin'.
% If there are two authors, write 'A.L. Benjamin and C.C. Broadus Jr.'
% If there are more than two authors, '[...] et al.' is used.

\institute{
German Research Center for Artificial Intelligence (DFKI), Kaiserslautern, Germany 
\email{\{shashank.mishra, karan\_sanjay.patil, didier.stricker, jason.rambach\}@dfki.de} 
\and 
Paul Wurth S.A, Luxembourg 
\email{cedric.schockaert@sms-group.com} 
\and 
Rheinland-Pfälzische Technische Universität Kaiserslautern-Landau, Germany}

\maketitle              % typeset the header of the contribution

\begin{abstract}
Root cause analysis (RCA) for time-series anomaly detection is critical for the reliable operation of complex real-world systems. Existing explanation methods often rely on unrealistic feature perturbations and ignore temporal and cross-feature dependencies, leading to unreliable attributions. We propose a conditional attribution framework that explains anomalies relative to contextually similar normal system states. Instead of using marginal or randomly sampled baselines, our method retrieves representative normal instances conditioned on the anomalous observation, enabling dependency-preserving and operationally meaningful explanations. To support high-dimensional time-series data, contextual retrieval is performed in learned low-dimensional representations using both variational autoencoder latent spaces and UMAP manifold embeddings. By grounding the retrieval process in the system's learned manifold, this strategy avoids out-of-distribution artifacts and ensures attribution fidelity while maintaining computational efficiency. We further introduce confidence-aware and temporal evaluation metrics for assessing explanation reliability and responsiveness. Experiments on the SWaT and MSDS benchmarks demonstrate that the proposed approach consistently improves root-cause identification accuracy, temporal localization, and robustness across multiple anomaly detection models. These results highlight the practical utility of conditional attribution for explainable anomaly diagnosis in complex time-series systems. Code and models are available at: \href{https://github.com/dfki-av/Conditional-Attribution-for-Root-Cause-Analysis-in-Time-Series-Anomaly-Detection}{GitHub repository}.

\keywords{Explainable AI  \and Generative Models \and Anomaly Detection.}
\end{abstract}

\section{Introduction}

Modern anomaly detection systems are increasingly deployed in safety-critical and large-scale environments such as industrial control systems, cyber-physical infrastructure, and cloud platforms 
\cite{kumar2025anomaly,xie2020multivariateswat,abshari2025survey}. While recent advances in deep learning have significantly improved anomaly detection accuracy, practical deployment requires more than detection alone: operators must understand \emph{why} an anomaly occurred in order to diagnose faults, take corrective action, and prevent recurrence. This has positioned root cause analysis (RCA) as a central challenge in multivariate time-series monitoring.

Existing RCA methods often adapt feature attribution techniques \cite{lundberg2017unifiedshap,sundararajan2017axiomatic} that assume feature independence. In multivariate time-series, strong temporal and cross-sensor correlations cause these methods to rely on unrealistic, out-of-distribution (OOD) perturbations \cite{vlassopoulos2020explaining}. Recent attempts to address these limitations through structured or group-wise attributions \cite{jullum2021groupshapleyefficientpredictionexplanation,covert2020understandingglobalfeaturecontributions} still rely on fixed background datasets or heuristic sampling strategies, which fail to preserve the conditional structure of complex physical processes, scale poorly in high-dimensional settings, and offer limited guarantees regarding the fidelity of the explanations. Consequently, current explanations are often operationally implausible, creating a gap between detection and actionable diagnostic insight.

In this work, we bridge this gap with a \textbf{conditional attribution framework} that explains anomalies relative to contextually similar normal system states. To handle high-dimensionality and noise, we perform contextual retrieval in learned manifold representations using both Variational Autoencoders (VAE) \cite{kingma2013auto} and UMAP \cite{healy2024uniform}. By grounding attribution in representative normal instances that reflect realistic operating conditions, our framework preserves structural dependencies and ensures that explanations are both model-agnostic and physically consistent.

Beyond feature-level identification, we address the critical need for \textit{temporal localization} and \textit{principled evaluation} in time-series RCA. Our framework enables precise onset localization within long-sequence anomalies by performing attribution over structured temporal windows, capturing the evolution of abnormal behavior. To bridge the gap between heuristic rankings and operational utility, we introduce two novel metrics: (i) \textbf{CW-RCS}, a confidence-aware measure that incorporates attribution strength to penalize diffuse explanations, and (ii) \textbf{TemporalHM}, which quantifies the responsiveness and stability of root-cause identification following anomaly onset. Together, these contributions provide a more rigorous assessment of explanation fidelity and diagnostic reliability than standard retrieval-based measures.

% We evaluate the proposed framework through extensive experiments on the SWaT industrial control benchmark \cite{xie2020multivariateswat}, where ground-truth anomalous sensors are available, as well as through controlled anomaly injection scenarios. In addition, we demonstrate the practical applicability of our approach using real industrial monitoring data, highlighting its potential for supporting diagnostic workflows in operational environments.

% In summary, this work makes the following contributions:

% \begin{enumerate}

% \item \textbf{Conditional attribution framework for root cause analysis} that identifies anomalous variables relative to contextually similar normal system states while preserving temporal and cross-feature dependencies.

% \item \textbf{Representation-guided contextual retrieval} using variational autoencoder latent spaces and UMAP embeddings to enable scalable and model-agnostic attribution in high-dimensional time-series data.

% \item \textbf{Two confidence-aware evaluation metrics for RCA:}  
% (i) Confidence-Weighted Top-$K$ Recall, and  
% (ii) Root Cause Identification Time, capturing explanation reliability and temporal responsiveness.

% \end{enumerate}

% Together, these contributions advance the reliability and practical utility of explainable anomaly detection, bridging the gap between anomaly detection performance and actionable root cause diagnosis in complex time-series systems.

We evaluate our framework on the \textbf{SWaT} \cite{xie2020multivariateswat} industrial benchmark and the high-dimensional \textbf{MSDS} \cite{nedelkoski2020multi} dataset, alongside a real-world case study on industrial blast furnace monitoring data. Our results demonstrate superior localization accuracy and operational utility in complex, dependent systems.

\textbf{Our contributions are threefold:}
\begin{enumerate}
    \item \textbf{Conditional Attribution Framework:} A novel RCA approach that explains anomalies relative to contextually similar normal states, preserving temporal and cross-feature dependencies to avoid out-of-distribution artifacts.
    \item \textbf{Manifold-Guided Contextual Retrieval:} A scalable, model-agnostic st\-rategy using VAE and UMAP embeddings to retrieve representative baselines in high-dimensional latent spaces.
    \item \textbf{Rigorous Evaluation Metrics:} We introduce \textbf{CW-RCS} (Confidence-Weighted Root Cause Score) and \textbf{TemporalHM} to quantify explanation reliability and temporal responsiveness, bridging the gap between detection and actionable diagnosis.
\end{enumerate}
This work provides a principled foundation for high-fidelity, dependency-preserving explanations in complex time-series systems.

\section{Related Work}
% \subsection{Time-Series Anomaly Detection}
% While deep learning models like Transformers \cite{xu2021anomaly} and TranAD \cite{tuli2022tranad} have significantly improved detection accuracy by modeling complex temporal dependencies, they typically function as "black boxes". Anomaly scores alone are insufficient for operational diagnosis, necessitating explanation-aware methods that provide reliable root cause information alongside detection.

%Recent advances in time-series anomaly detection have been driven by deep learning models such as recurrent networks, temporal convolutional architectures, and Transformer-based methods \cite{xu2021anomaly}, which achieve strong performance by modeling complex temporal dependencies. Approaches such as Anomaly Transformer, TranAD \cite{tuli2022tranad}, and related attention-based frameworks \cite{wen2022transformers} have demonstrated improved sensitivity to subtle and long-range anomalies. However, despite these advances, most existing methods focus primarily on detection accuracy and provide limited insight into the underlying causes of anomalous behavior. In practice, anomaly scores alone are insufficient for diagnosis, as they do not reveal which variables or temporal segments contribute to the detected anomaly. This limitation motivates the need for explanation-aware anomaly detection methods that can provide reliable and interpretable root cause information alongside accurate detection.

\subsection{Root Cause Analysis (RCA) in Multivariate Time-Series}
Existing RCA paradigms generally fall into three categories: (i) Statistical methods (e.g., EXstream \cite{zhang2017exstream}, SHAP \cite{lundberg2017unifiedshap,jullum2021groupshapleyefficientpredictionexplanation}) which rank features by deviation intensity but often fall into the "symptom-as-cause" trap; (ii) Causal Discovery (e.g., MicroRCA \cite{wu2020microrca}, CloudRanger \cite{wang2018cloudranger}) which utilize dependency graphs but struggle with the high-dimensionality and non-linearity of industrial data; and (iii) Reconstruction-based methods (e.g., OmniAnomaly \cite{su2019robust}, Interfusion \cite{li2021multivariate}) which use generative models to identify root causes via reconstruction error or counterfactuals . Despite their utility, these methods frequently rely on marginal perturbations or static backgrounds that fail to preserve the complex conditional dependencies inherent in time-series systems.

\subsection{Feature Attribution and Explainability}
Feature attribution methods aim to quantify the contribution of individual input variables to model predictions. Model-agnostic approaches such as LIME \cite{ribeiro2016should} and SHAP \cite{lundberg2017unifiedshap} approximate local decision boundaries or decompose predictions into additive feature contributions. Gradient-based techniques, including Integrated Gradients \cite{sundararajan2017axiomatic}, extend attribution to deep neural networks. While these methods have been widely adopted for tabular and vision tasks, their direct application to multivariate time-series anomaly detection remains challenging. In particular, many attribution techniques rely on marginal perturbations or independence assumptions that fail to preserve temporal and cross-feature dependencies, limiting their reliability for root cause analysis in complex monitoring systems.

\section{Problem Formulation}
\label{problem_formulation}
We study root cause analysis (RCA) for anomalies in multivariate time-series. The key challenge is to produce explanations that are \emph{faithful} to the anomaly detector while respecting the strong temporal and cross-sensor dependencies present in real systems.

\subsection{Multivariate Time-Series Anomaly Detection}
Let $\mathbf{x}_t \in \mathbb{R}^d$ denote the $d$-dimensional sensor measurement at time $t \in \{1,\dots,T\}$, and let $\mathbf{X} = (\mathbf{x}_1,\dots,\mathbf{x}_T)$ denote the full multivariate time-series. We define the sliding window of length $w$ starting at time $t$ as
\begin{equation}
\mathbf{W}_t := (\mathbf{x}_t,\mathbf{x}_{t+1},\dots,\mathbf{x}_{t+w-1}) \in \mathbb{R}^{w \times d}.
\end{equation}
An anomaly detector is a function $f:\mathbb{R}^{w \times d} \rightarrow \mathbb{R}$, which maps a window $\mathbf{W}_t$ to an anomaly score $s_t := f(\mathbf{W}_t)$.

\subsection{Root Cause Attribution Objective}
The objective of RCA is to estimate an attribution tensor $\boldsymbol{\Phi}_t \in \mathbb{R}^{w \times d}$, where each element $\Phi_t(\tau, j)$ quantifies the contribution of sensor $j$ at relative time $\tau$. The total sensor-level attribution, which we denote as $\phi_{t,j}$ to differentiate it from the local attribution values, is obtained by aggregating over the window:
\begin{equation}
\phi_{t,j} = \sum_{\tau=1}^{w} \Phi_t(\tau,j).
\end{equation}

\subsection{Conditional Attribution Formulation}
Let $\mathbf{W}_t^{(-j)}$ denote the window context excluding the $j$-th sensor's trajectory. To quantify the contribution of sensor $j$ relative to normal system behavior, let $p(\cdot)$ denote the distribution of normal (non-anomalous) system windows, and let $\mathbf{W}'_j$ denote a replacement trajectory for sensor $j$ sampled from the corresponding normal distribution. We define the \textbf{marginal attribution} and the \textbf{conditional attribution} as:

\begin{equation}
\phi_{t,j}^{\text{marg}} = \mathbb{E}_{\mathbf{W}'_j \sim p(\mathbf{W}_j)} \left[ f(\mathbf{W}_t) - f(\mathbf{W}_t^{(-j)}, \mathbf{W}'_j) \right]
\end{equation}
\begin{equation}
\phi_{t,j}^{\text{cond}} = \mathbb{E}_{\mathbf{W}'_j \sim p(\mathbf{W}_j \mid \mathbf{W}_t^{(-j)})} \left[ f(\mathbf{W}_t) - f(\mathbf{W}_t^{(-j)}, \mathbf{W}'_j) \right].
\end{equation}

In practice, direct sampling from the conditional distribution of normal behavior is intractable. We approximate it using a contextual neighborhood $\mathcal{N}(\mathbf{W}_t)$, defined as the set of $K$ normal windows $\mathbf{W}'$ that minimize the contextual distance $d(\mathbf{W}_t^{(-j)}, \mathbf{W}'^{(-j)}) = \|\text{vec}(\mathbf{W}_t^{(-j)}) - \text{vec}(\mathbf{W}'^{(-j)})\|_2$ in the context space. This yields the empirical estimator:
\begin{equation}
\hat{\phi}_{t,j} = \frac{1}{K} \sum_{\mathbf{W}' \in \mathcal{N}(\mathbf{W}_t)} \left[ f(\mathbf{W}_t) - f(\mathbf{W}_t^{(-j)}, \mathbf{W}'_j) \right].
\label{eq:equation5}
\end{equation}

\begin{proposition}[Dependency-Preserving Attribution]
Assume $f(\cdot)$ is $L$-Lipschitz continuous with respect to the $j$-th sensor trajectory, i.e., $|f(\mathbf{W}_t^{(-j)}, \mathbf{u}) - f(\mathbf{W}_t^{(-j)}, \mathbf{v})| \leq L \|\mathbf{u} - \mathbf{v}\|_2$ for any trajectories $\mathbf{u}, \mathbf{v} \in \mathbb{R}^w$. The attribution bias induced by marginal perturbation is bounded by:
\begin{equation}
|\phi_{t,j}^{\text{cond}} - \phi_{t,j}^{\text{marg}}| \leq L \cdot W_1\left( p(\mathbf{W}_j \mid \mathbf{W}_t^{(-j)}), p(\mathbf{W}_j) \right),
\end{equation}
where $W_1$ is the Wasserstein-1 distance computed over trajectories in $\mathbb{R}^w$ using the $\ell_2$-norm as the ground metric.
\end{proposition}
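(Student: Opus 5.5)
The plan is to rewrite the difference of the two attributions as a difference of expectations of one Lipschitz test function under two distributions, and then invoke Kantorovich--Rubinstein duality. Fix $t$ and $j$, and write $g(\mathbf{u}) := f(\mathbf{W}_t^{(-j)}, \mathbf{u})$ for $\mathbf{u}\in\mathbb{R}^w$. Write $\mu := p(\mathbf{W}_j \mid \mathbf{W}_t^{(-j)})$ for the conditional law and $\nu := p(\mathbf{W}_j)$ for the marginal law. The term $f(\mathbf{W}_t)$ is deterministic, so it appears identically in both definitions and cancels on subtraction:
\begin{equation*}
\phi_{t,j}^{\text{cond}} - \phi_{t,j}^{\text{marg}} = \mathbb{E}_{\mathbf{v}\sim\nu}[g(\mathbf{v})] - \mathbb{E}_{\mathbf{u}\sim\mu}[g(\mathbf{u})].
\end{equation*}
For this manipulation I will assume both expectations are finite. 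It suffices that $\mu$ and $\nu$ have finite first moments, which is exactly the setting in which $W_1(\mu,\nu)$ is finite and meaningful. Given that, the Lipschitz bound $|g(\mathbf{u})|\le |g(\mathbf{0})| + L\|\mathbf{u}\|_2$ makes $g$ integrable under both laws. If $W_1=\infty$, the claim is trivial.

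By hypothesis, $g$ is $L$-Lipschitz with respect to $\|\cdot\|_2$ on $\mathbb{R}^w$. I see two routes from here.

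The first route is a direct coupling argument, and I would present it because it is self-contained. Take any coupling $\pi$ of $(\mu,\nu)$, that is, a joint law of $(\mathbf{U},\mathbf{V})$ with marginals $\mu$ and $\nu$. Then
\begin{equation*}
\bigl|\mathbb{E}_\nu g - \mathbb{E}_\mu g\bigr| = \bigl|\mathbb{E}_\pi[g(\mathbf{V}) - g(\mathbf{U})]\bigr| \le \mathbb{E}_\pi|g(\mathbf{V})-g(\mathbf{U})| \le L\,\mathbb{E}_\pi\|\mathbf{U}-\mathbf{V}\|_2 .
\end{equation*}
Taking the infimum over all couplings $\pi$ gives $L\cdot W_1(\mu,\nu)$ by the primal definition of $W_1$ with ground metric $\ell_2$. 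This holds whether or not the infimum is attained, so existence of an optimal coupling is not needed.

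The second route is Kantorovich--Rubinstein duality. It gives the bound at once: $g/L$ is $1$-Lipschitz, and $W_1(\mu,\nu)=\sup_{\|h\|_{\mathrm{Lip}}\le 1}\bigl|\mathbb{E}_\mu h-\mathbb{E}_\nu h\bigr|$.

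There is no real obstacle; the only care needed is measure-theoretic. I must confirm that the conditional law $\mu$ is a well-defined probability measure, via a regular conditional distribution given the context $\mathbf{W}_t^{(-j)}$. I must also confirm that $g$ is measurable, which holds because it is continuous. Both are standard on $\mathbb{R}^{w\times d}$, so I would state them briefly and conclude.
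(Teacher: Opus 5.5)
Your proposal is correct and follows the paper's proof. The paper likewise sets $g(\mathbf{W}'_j) = f(\mathbf{W}_t^{(-j)}, \mathbf{W}'_j)$, cancels $f(\mathbf{W}_t)$ by linearity of expectation, and bounds the resulting difference of expectations of an $L$-Lipschitz function by $L \cdot W_1$ via Kantorovich--Rubinstein duality, which is your second route. Your coupling argument proves the same inequality using only the easy direction of the duality, so it is more self-contained, and your integrability and conditional-distribution remarks state assumptions that the paper leaves implicit.
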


\begin{proof}
Let $g(\mathbf{W}'_j) = f(\mathbf{W}_t^{(-j)}, \mathbf{W}'_j)$. By the linearity of expectation:
\[ |\phi_{t,j}^{\text{cond}} - \phi_{t,j}^{\text{marg}}| = \left| \mathbb{E}_{p(\mathbf{W}_j \mid \mathbf{W}_t^{(-j)})} [g(\mathbf{W}'_j)] - \mathbb{E}_{p(\mathbf{W}_j)} [g(\mathbf{W}'_j)] \right|. \]
By the Lipschitz assumption, $g$ is an $L$-Lipschitz function. According to the Kantorovich-Rubinstein duality, the following holds:
\[ \left| \mathbb{E}_{P}[g] - \mathbb{E}_{Q}[g] \right| \leq L \cdot W_1(P, Q). \]
By substituting $P = p(\mathbf{W}_j \mid \mathbf{W}_t^{(-j)})$ and $Q = p(\mathbf{W}_j)$, we obtain the bound. The bias represents the systematic error introduced by evaluating the detector on out-of-distribution counterfactuals that violate learned sensor dependencies.
\end{proof}

\begin{figure}[t]
  \centering
  \includegraphics[width=0.9\linewidth]{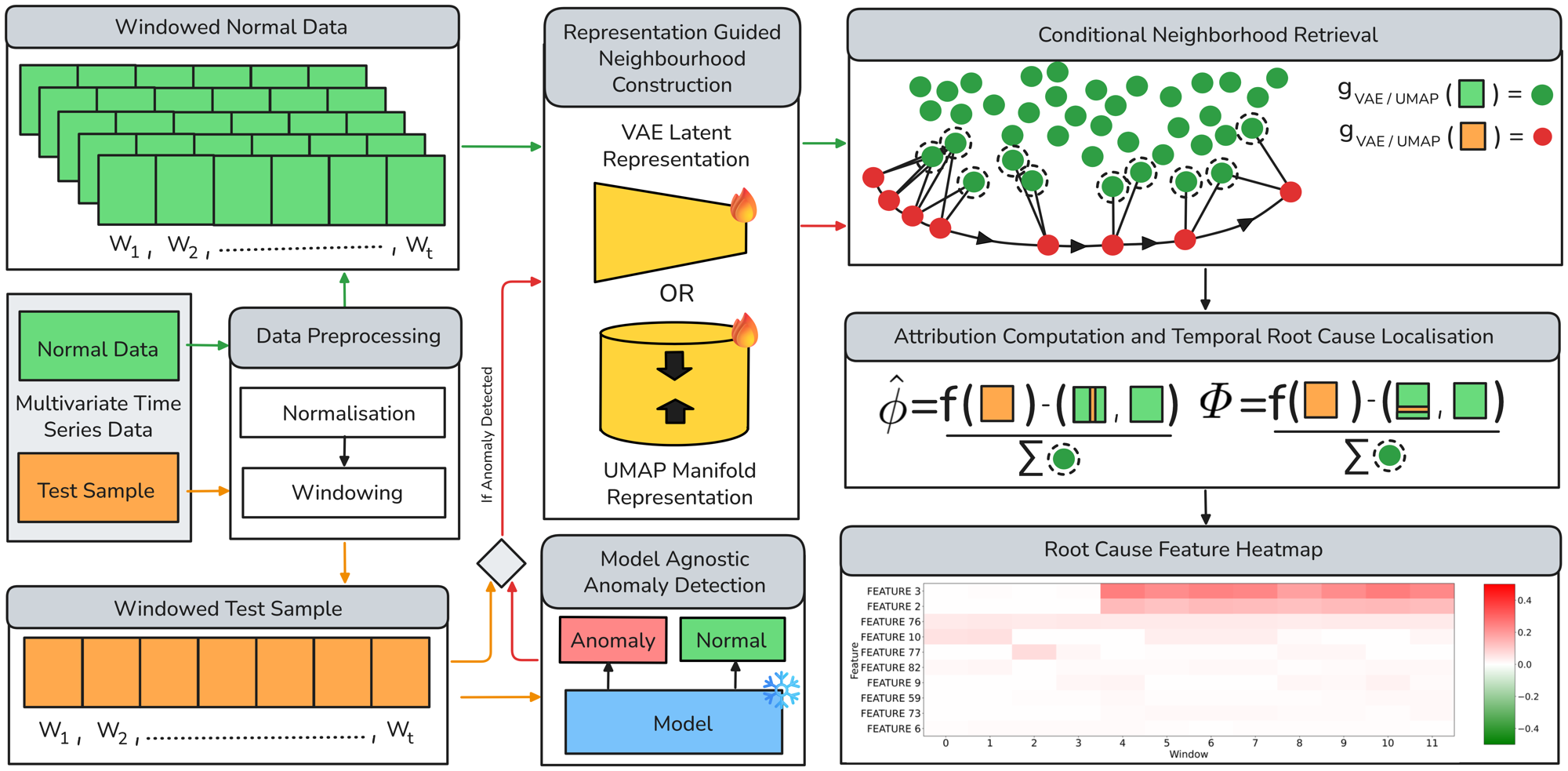}
  \caption{Overview of the proposed model-agnostic RCA pipeline. Steps include latent-space neighborhood construction, conditional attribution computation, and temporal root cause localization for multivariate industrial sensors.}
  \label{fig:architecture}
\end{figure}

% \section{Conditional Attribution Framework}
% We now describe the practical framework used to operationalize conditional attribution for root cause analysis. While \Cref{problem_formulation} defines attribution relative to the conditional distribution of normal system behavior, direct sampling from this distribution is intractable in high-dimensional multivariate time-series. We approximate it through contextual neighborhood retrieval, where representative normal windows are selected based on similarity to the anomalous context. Attribution is then computed via dependency-preserving counterfactual replacement using these retrieved neighborhoods, yielding a scalable and model-agnostic explanation framework.

\section{Conditional Attribution Framework}
We operationalize the conditional attribution for RCA through a scalable, model-agnostic framework illustrated in \Cref{fig:architecture}. Since direct sampling from the conditional distribution $P(\mathbf{x}_j \mid \mathbf{x}_{-j})$ is intractable in high-dimensional time-series, we approximate it via \textbf{contextual neighborhood retrieval}. Representative normal windows are selected based on their proximity to the anomalous context in a learned representation space. Attribution is then computed through dependency-preserving counterfactual replacement using these retrieved neighbors, ensuring explanations remain grounded in the system's manifold and avoid out-of-distribution artifacts.

\subsection{Representation-Guided Neighborhood Construction}

% \begin{figure}[t]
%   \centering
%   \includegraphics[width=0.9\linewidth]{images/latent_space_dummy_single.png}
%   \caption{Encoded Samples comparison for VAE and UMAP for SWaT dataset attacks (dummy: total 36 attacks)}
%   \label{fig:latent_space_dummy}
% \end{figure}

\begin{figure*}[t]
    \centering
    \begin{subfigure}[t]{0.24\textwidth}
        \centering
        \includegraphics[width=\linewidth]{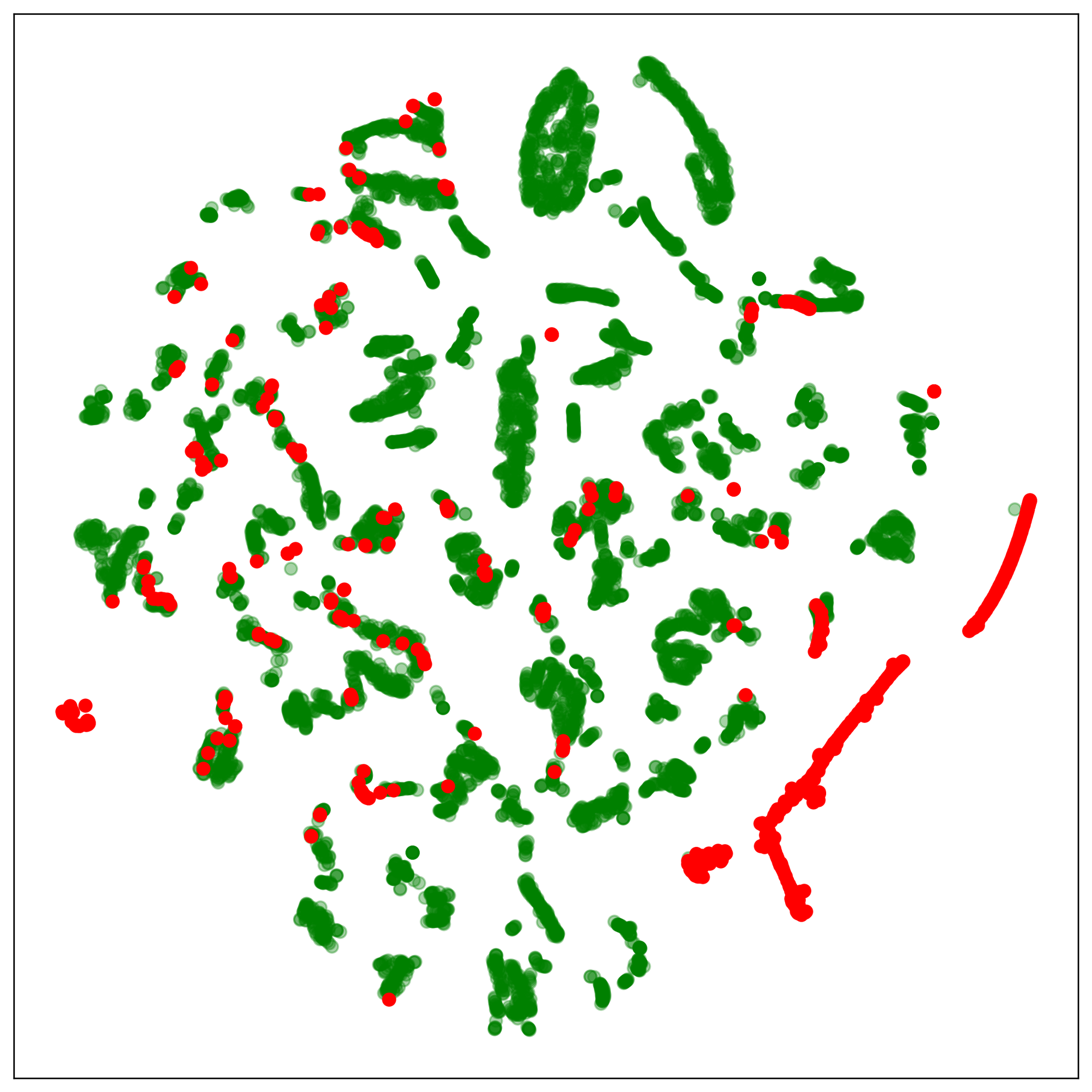}
        \label{fig:img1}
    \end{subfigure}
    \hfill
    \begin{subfigure}[t]{0.24\textwidth}
        \centering
        \includegraphics[width=\linewidth]{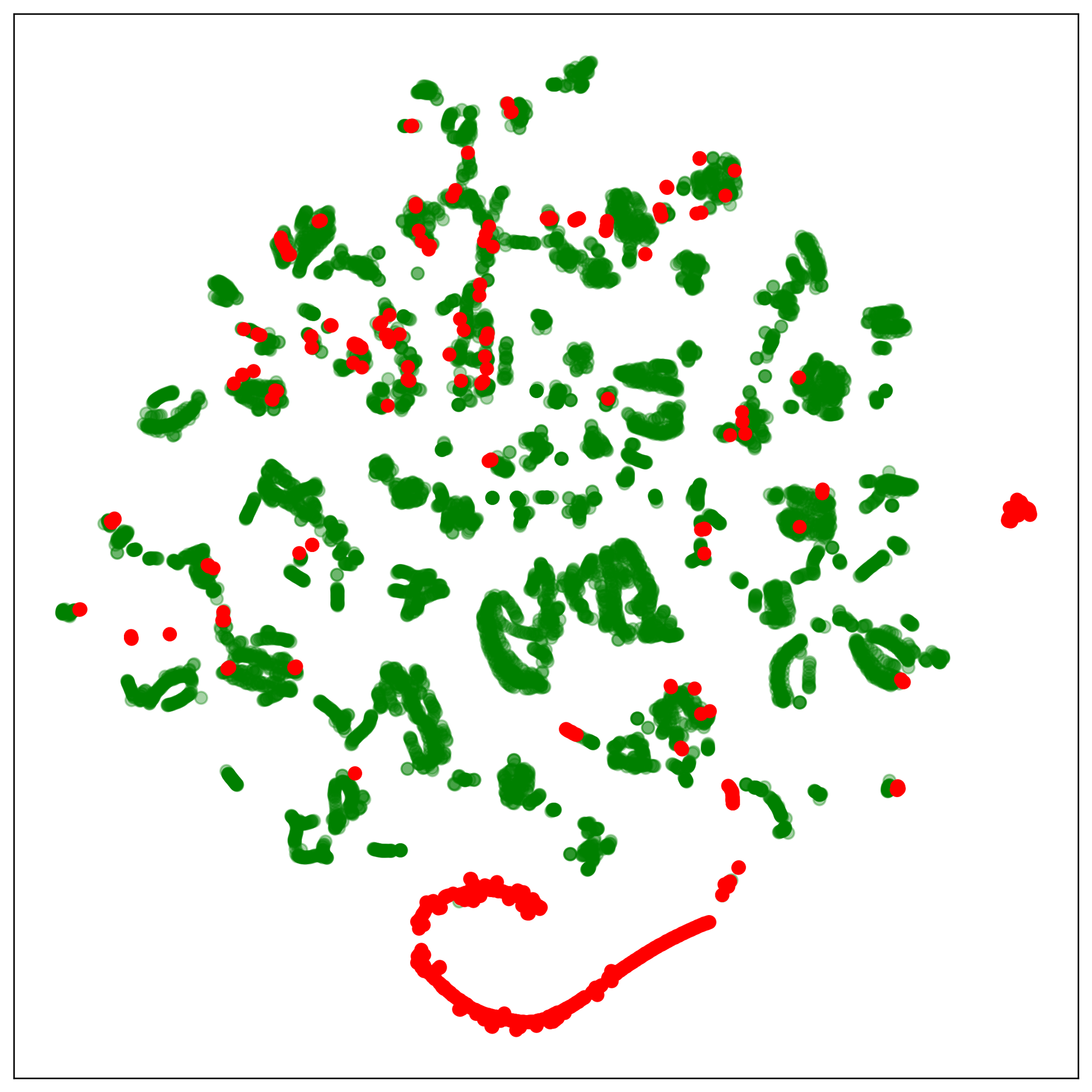}
        \label{fig:img2}
    \end{subfigure}
    \hfill
    \begin{subfigure}[t]{0.24\textwidth}
        \centering
        \includegraphics[width=\linewidth]{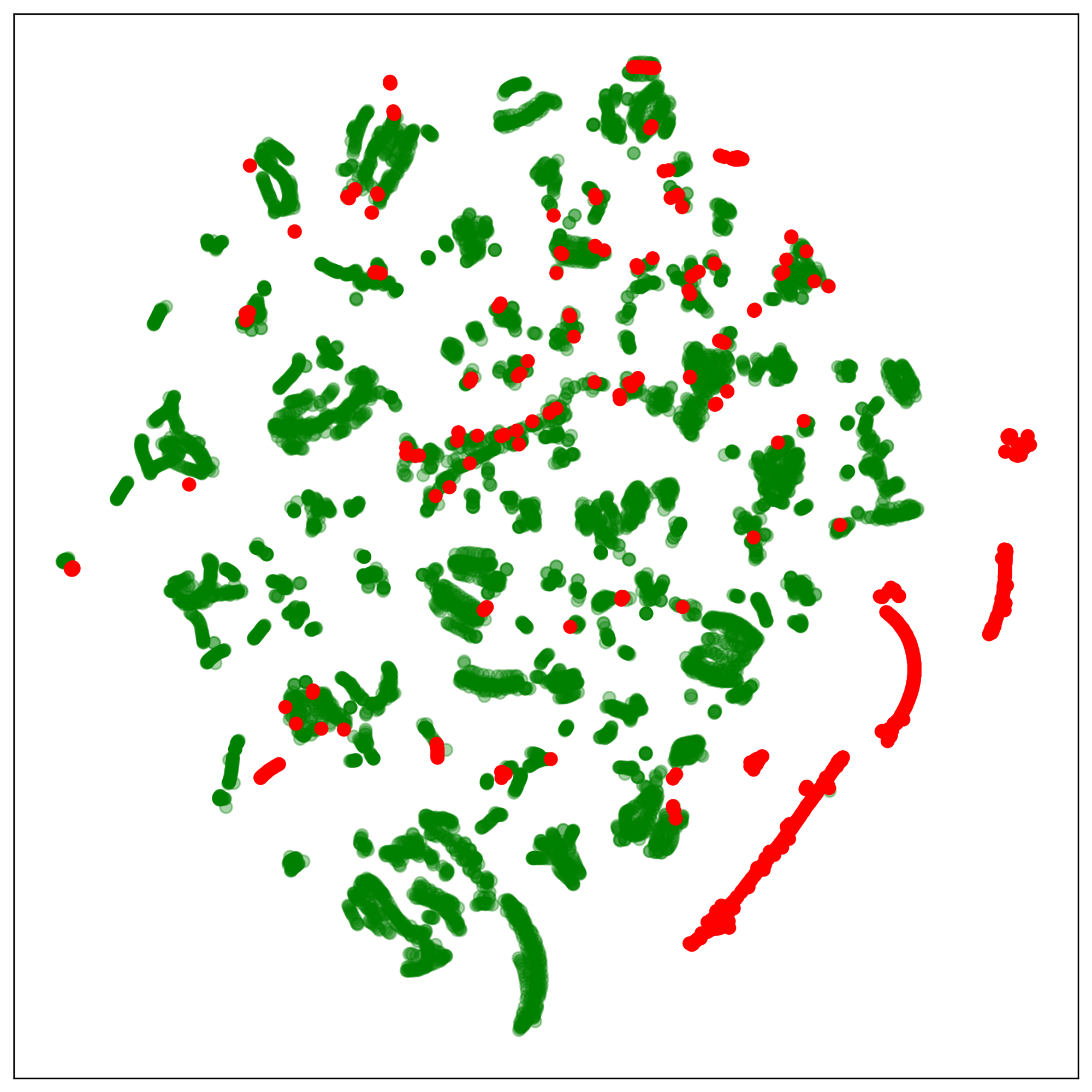}
        \label{fig:img3}
    \end{subfigure}
    \hfill
    \begin{subfigure}[t]{0.24\textwidth}
        \centering
        \includegraphics[width=\linewidth]{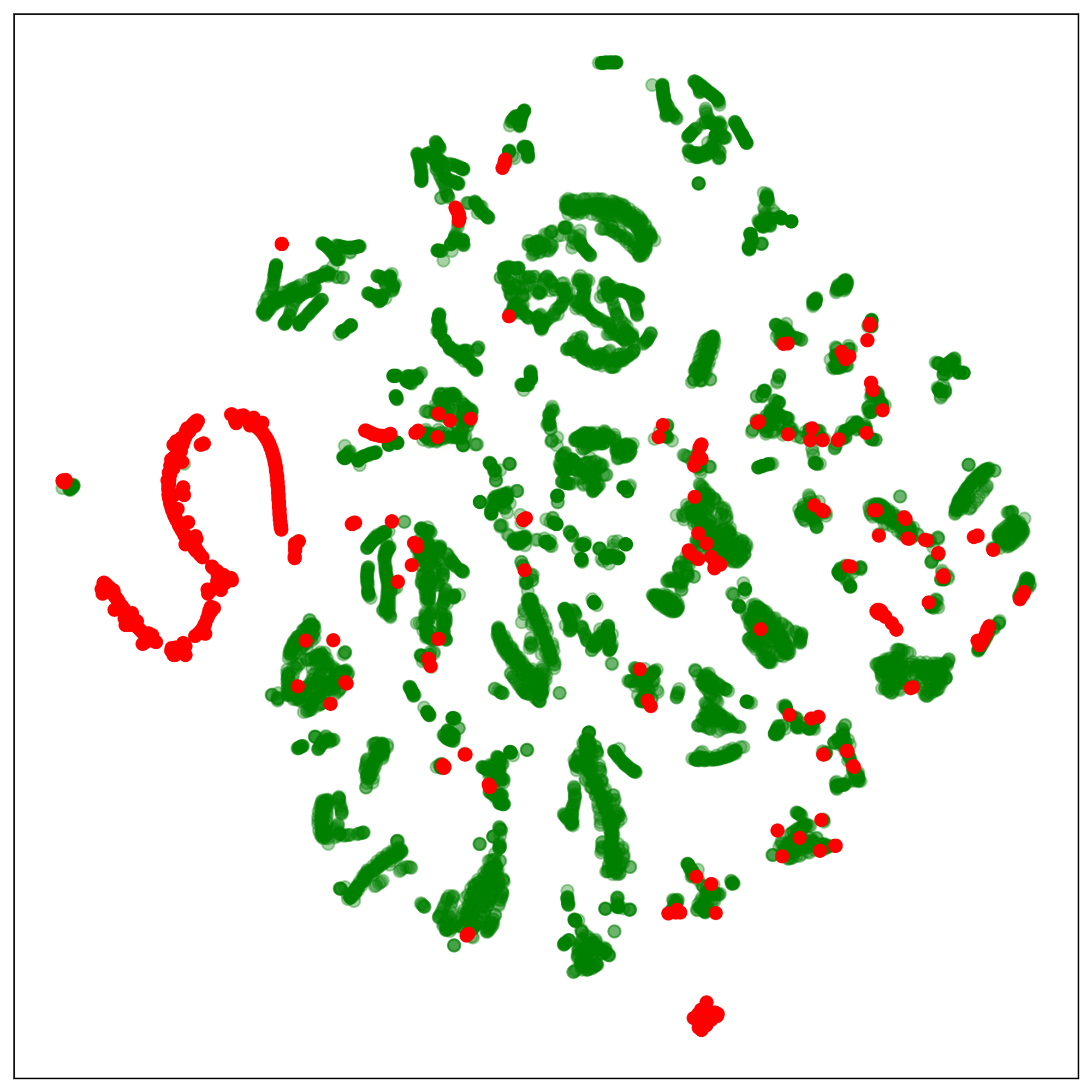}
        \label{fig:img4}
    \end{subfigure}
    \caption{SWaT Manifold Topology ($d \in {4,8,16,32}$). Normal (green) and anomalous (red) regions overlap significantly, indicating anomalies can be locally indistinguishable from normal operating modes. This highlights the need for a conditional framework.}
    \label{fig:four_images}
\end{figure*}

Direct neighborhood construction in the input space becomes unreliable in high-dimensional multivariate time-series due to the curse of dimensionality and noise sensitivity. To obtain semantically meaningful conditional neighborhoods, we perform similarity retrieval in learned low-dimensional representations that preserve system dependencies.

Let $g:\mathbb{R}^{w \times d} \rightarrow \mathbb{R}^k$ denote an embedding function that maps windows to a $k$-dimensional representation space. The conditional neighborhood is then defined as
%\vspace{-2em} %becuase this is reducing the gap between diagram adn text.
\begin{equation}
\mathcal{N}(\mathbf{W}_t)
=
\operatorname*{arg\,min}_{\{\mathbf{W}^{(i)}\} \subset \mathcal{D}_{\text{norm}}}^{K}
\left\|
g(\mathbf{W}_t^{(-j)}) - g(\mathbf{W}^{(i)(-j)})
\right\|_2.
\end{equation}

By performing retrieval in representation space, the neighborhood better reflects latent system structure and nonlinear dependencies, yielding more faithful approximations of the conditional distribution.

\paragraph{\textbf{VAE Latent Representation}}

We first instantiate $g(\cdot)$ using the encoder of a variational autoencoder \cite{kingma2013auto} trained on normal system windows. Let the encoder map each window to a latent variable $\mathbf{z} \sim q_\theta(\mathbf{z}\mid\mathbf{W})$. We define the embedding as the posterior mean:

\begin{equation}
g_{\text{VAE}}(\mathbf{W}) = \mathbb{E}[\mathbf{z} \mid \mathbf{W}].
\end{equation}

Neighborhood retrieval in latent space captures nonlinear system dependencies while reducing dimensionality, improving both computational efficiency and conditional fidelity.

\paragraph{\textbf{UMAP Manifold Representation}}

As an alternative representation, we employ Uniform Manifold Approximation and Projection (UMAP) to learn a low-dimensional embedding that preserves local neighborhood topology. Let

\begin{equation}
g_{\text{UMAP}} : \mathbb{R}^{w \times d} \rightarrow \mathbb{R}^k
\end{equation}

denote the learned manifold embedding. Conditional neighborhoods constructed in this space capture geometric similarity between system states, enabling non-parametric retrieval that preserves local manifold connectivity without the reconstruction bias of generative models.

% \subsection{Representation-Guided Neighborhood Retrieval}
% Input-space neighborhood construction is often unreliable in high-dimensional settings due to noise and the curse of dimensionality. To obtain semantically meaningful conditional neighborhoods, we map windows to a $k$-dimensional space via an embedding function $g:\mathbb{R}^{w \times d} \rightarrow \mathbb{R}^k$ that preserves system dependencies. The conditional neighborhood $\mathcal{N}(\mathbf{W}_t)$ is defined as the set of $K$ normal windows $\{\mathbf{W}^{(i)}\} \subset \mathcal{D}_{\text{norm}}$ minimizing the Euclidean distance in the representation space:
% \begin{equation}
% \mathcal{N}(\mathbf{W}_t) = \operatorname*{arg\,min}_{\{\mathbf{W}^{(i)}\}}^{K} \| g(\mathbf{W}_t) - g(\mathbf{W}^{(i)}) \|_2.
% \end{equation}

% We instantiate $g(\cdot)$ using two distinct strategies. \textbf{(i) VAE Latent Representation:} We utilize the encoder of a Variational Autoencoder \cite{kingma2013auto} trained on normal data, defining the embedding as the posterior mean $g_{\text{VAE}}(\mathbf{W}) = \mathbb{E}[\mathbf{z} \mid \mathbf{W}]$. This captures nonlinear dependencies while providing a compressed, denoised latent space for retrieval. \textbf{(ii) UMAP Manifold Representation:} Alternatively, we employ UMAP \cite{mcinnes2018umap} to learn an embedding $g_{\text{UMAP}}$ that preserves local neighborhood topology. This manifold-based approach captures geometric similarity between system states without requiring a generative model, as illustrated in the latent projections of \Cref{fig:latent_space_dummy}.

\subsection{Conditional Neighborhood Retrieval}
\label{CNC}

To approximate $p(\mathbf{W}_j \mid \mathbf{W}_t^{(-j)})$, we construct a sensor-specific neighborhood $\mathcal{N}_j(\mathbf{W}_t)$ from a reference set of normal windows $\mathcal{D}_{\text{norm}}$. For an anomalous window $\mathbf{W}_t$, we retrieve the $K$ nearest neighbors conditioned on the context $\mathbf{W}_t^{(-j)}$:

\begin{equation}
\mathcal{N}_j(\mathbf{W}_t)
=
\operatorname{KNN}_K
\left(
\mathbf{W}_t^{(-j)}, \mathcal{D}_{\text{norm}}
\right),
\end{equation}

where $\operatorname{KNN}_K(\cdot)$ returns the $K$ nearest neighbors under the Frobenius distance $\|\cdot\|_F$. This conditioning ensures that retrieved states preserve system dependencies, enabling counterfactual replacements for sensor $j$ that remain within the normal data manifold.

% \subsection{Attribution Computation}

% Given an anomalous window $\mathbf{W}_t$ and its conditional neighborhood $\mathcal{N}(\mathbf{W}_t)$, attribution is computed via dependency-preserving counterfactual replacement. For each sensor $j$, we replace its trajectory $\mathbf{W}_{t,j}$ with trajectories drawn from neighborhood windows while keeping the context $\mathbf{W}_t^{(-j)}$ fixed.

% The empirical conditional attribution is estimated as:

% \begin{equation}
% \hat{\phi}_{t,j}
% =
% \frac{1}{K}
% \sum_{\mathbf{W}' \in \mathcal{N}(\mathbf{W}_t)}
% \left[
% f(\mathbf{W}_t)
% -
% f(\mathbf{W}_t^{(-j)}, \mathbf{W}'_j)
% \right].
% \end{equation}

% To obtain the attribution tensor $\boldsymbol{\Phi}_t \in \mathbb{R}^{w \times d}$, sensor-level contributions are distributed over temporal indices through localized perturbations, enabling joint sensor–time root cause localization.

\subsection{Attribution Computation}
We operationalize the framework by estimating the empirical conditional attribution $\hat{\phi}_{t,j}$ as defined in \Cref{eq:equation5}. This involves averaging the model's response across the retrieved neighborhood $\mathcal{N}(\mathbf{W}_t)$ while keeping the non-target sensor context $\mathbf{W}_t^{(-j)}$ fixed. By grounding the counterfactual replacement in representative normal instances rather than global averages, the computation ensures that the resulting importance scores reflect realistic system deviations. This model-agnostic approach allows the framework to scale efficiently across high-dimensional sensor suites without making restrictive assumptions about the underlying anomaly detector's architecture.

\subsection{Temporal Root Cause Localization}

Beyond identifying contributing sensors, root cause analysis in time-series requires localizing when anomalous behavior emerges. To achieve this, we extend conditional attribution to the temporal dimension by computing contributions over localized segments within the window.

For each sensor $j$ and relative time index $\tau$, we construct time-localized counterfactuals by replacing the value $x_{t+\tau-1,j}$ (or short temporal segments centered at $\tau$) with corresponding values drawn from neighborhood windows while preserving the remaining context. The temporal attribution is then estimated as

\begin{equation}
\Phi_t(\tau,j)
=
\frac{1}{K}
\sum_{\mathbf{W}' \in \mathcal{N}(\mathbf{W}_t)}
\left[
f(\mathbf{W}_t)
-
f(\mathbf{W}_t^{(-(\tau,j))}, \mathbf{W}'_{(\tau,j)})
\right],
\end{equation}

where $\mathbf{W}_t^{(-(\tau,j))}$ denotes the window with the value at $(\tau,j)$ replaced, and $\mathbf{W}'_{(\tau,j)}$ denotes the corresponding value from the neighborhood window.

Aggregating $\Phi_t(\tau,j)$ across $\tau$ recovers sensor-level attribution, while temporal aggregation across $j$ enables anomaly onset localization, yielding a structured sensor–time explanation map.

\subsection{Computational Complexity Analysis}

Let $N$ denote the number of normal windows, $w$ the window length, and $d$ the number of sensors. Input-space neighborhood retrieval requires computing distances in $\mathbb{R}^{w \times d}$, resulting in a complexity of $\mathcal{O}(Nwd)$ per query. 

When retrieval is performed in a learned representation space $g(\cdot) \in \mathbb{R}^k$ with $k \ll wd$, the cost reduces to $\mathcal{O}(Nk)$. Attribution requires $K$ counterfactual evaluations per sensor, leading to a total cost of $\mathcal{O}(dK)$ model evaluations.

Thus, representation-guided retrieval improves scalability while preserving conditional neighborhood fidelity.

% \subsection{Computational Complexity Analysis}

% Let $N$ denote the number of normal windows in $\mathcal{D}_{\text{norm}}$, $w$ the window length, and $d$ the number of sensors. Constructing conditional neighborhoods via input-space retrieval requires computing distances in $\mathbb{R}^{w \times d}$, yielding a complexity of $\mathcal{O}(Nwd)$ per query. 

% When retrieval is performed in a learned representation space $g(\cdot) \in \mathbb{R}^k$ with $k \ll wd$, the complexity reduces to $\mathcal{O}(Nk)$. Since attribution requires $K$ counterfactual evaluations per sensor, the total attribution cost becomes $\mathcal{O}(dK)$ model evaluations, independent of the original data dimensionality. 

% This representation-guided retrieval therefore improves both scalability and computational efficiency while preserving conditional neighborhood fidelity.

\section{Evaluation Metrics}

%We evaluate RCA performance using established retrieval metrics alongside two novel measures designed for explanation reliability and temporal responsiveness. Unlike traditional ranking-based metrics that ignore attribution magnitude and identification latency, our proposed confidence-aware and temporal metrics account for the concentration of attribution mass and the timeliness of root cause detection, providing a more rigorous assessment of operational utility in multivariate time-series.

We evaluate RCA using standard retrieval metrics and introduce two novel measures for explanation reliability and temporal responsiveness. Unlike ranking metrics that ignore attribution strength and detection delay, our metrics capture attribution concentration and detection timeliness, enabling a more rigorous evaluation of operational utility in multivariate time series.

\subsection{Retrieval-Based Metrics}
Following prior work \cite{xu2022anomaly,han2025root}, we evaluate root cause identification using the \textbf{Top-$K$ Recall} ($R@K$) metric. Let $\mathcal{S}_t \subseteq \{1,\dots,d\}$ denote the set of ground-truth root cause sensors for anomaly window $W_t$, and $\widehat{\mathcal{R}}_t^{(K)}$ be the set of top-$K$ sensors ranked by the aggregated attribution scores $\phi_{t,j}$. We assign a uniform weight $w_{t,j} = 1/|\mathcal{S}_t|$ to each causal sensor. The recall for anomaly $t$ is:
\[
\mathrm{R}@K(t) = \sum_{j \in \mathcal{S}_t} w_{t,j}\, \mathbf{1}(j \in \widehat{\mathcal{R}}_t^{(K)})
\]
The dataset-level score $\mathrm{R}@K$ is the average of $\mathrm{R}@K(t)$ over all $N$ anomalous windows. We report scores for $K \in \{3, 5, 10\}$, reflecting realistic settings where operators inspect only the top candidate sensors.

% Confidence-Weighted Top-K Recall.
\subsection{Confidence-Aware Metrics}
%Standard retrieval-based metrics evaluate whether the true root cause sensors appear among the highest-ranked variables but do not account for the magnitude of attribution assigned to them. In practical RCA scenarios, explanations that correctly retrieve the causal sensors while assigning them stronger attribution should be preferred. To capture this aspect, we introduce a confidence-aware metric called the \textit{Confidence-Weighted Root Cause Score (CW-RCS)}, which incorporates attribution strength into the evaluation of root cause identification.

Traditional RCA metrics evaluate whether ground-truth sensors appear in the top rankings but ignore the relative \textit{magnitude} of attribution. To reward explanations that assign dominant attribution mass to the correct causes, we propose the \textbf{Confidence-Weighted Root Cause Score (CW-RCS)}.

\paragraph{Definition.}
Let $\phi_{t,j}$ be the attribution for sensor $j$ at time $t$. We define the normalized confidence weight as $\tilde{\phi}_{t,j} = |\phi_{t,j}| / \sum_{\ell=1}^{d} |\phi_{t,\ell}|$. Let $\mathcal{S}_t$ denote the set of ground-truth root cause sensors and $\widehat{\mathcal{R}}_t^{(K)}$ be the set of top-$K$ ranked sensors. The CW-RCS for an anomaly at time $t$ is:
\begin{equation}
\mathrm{CW\text{-}RCS}(t) = \frac{1}{|\mathcal{S}_t|} \sum_{j \in \mathcal{S}_t \cap \widehat{\mathcal{R}}_t^{(K)}} \tilde{\phi}_{t,j}.
\end{equation}
The dataset-level score is the mean over $N$ anomalous windows:
\begin{equation}
\mathrm{CW\text{-}RCS}@K = \frac{1}{N} \sum_{t=1}^{N} \mathrm{CW\text{-}RCS}(t).
\end{equation}
Unlike rank-based metrics, CW-RCS rewards the model for maximizing the attribution gap between true causes and noise.

\paragraph{Property.}
The proposed metric is bounded by the standard Recall@K:
\begin{equation}
\mathrm{CW\text{-}RCS}(t) \leq \mathrm{Recall}@K(t).
\end{equation}
This holds because for any sensor $j$, the normalized attribution $\tilde{\phi}_{t,j} \leq 1$. Consequently, the sum of weights for correctly retrieved sensors in $\mathcal{S}_t \cap \widehat{\mathcal{R}}_t^{(K)}$ is naturally upper-bounded by the cardinality of that intersection. $\mathrm{CW\text{-}RCS}$ thus penalizes cases where the detector identifies the correct sensor but fails to distinguish it clearly from spurious attributions.

\subsection{Temporal Identification Metrics}
Explanation quality in time-series RCA depends on both the \textit{latency} of identification and its \textit{consistency} throughout the anomaly. We evaluate these via two complementary measures:

\paragraph{Components: Early Identification ($E$) and Persistence ($A$).}
% Let $t_0$ be the anomaly onset, $T_a$ its duration, and $t^*$ the first timestamp where any ground-truth sensor $j \in \mathcal{S}$ appears in the Top-$K$ set $\widehat{\mathcal{R}}_t^{(K)}$. We define the \textbf{Early Identification Score} as $E = \max(0, 1 - \frac{t^* - t_0}{T_a})$, where $E=0$ if the cause is never identified. To measure stability, the \textbf{Temporal Persistence Score} ($A$) represents the fraction of anomalous steps with a correct Top-$K$ attribution:
% \begin{equation}
% A = \frac{1}{T_a} \sum_{t=t_0}^{t_0 + T_a - 1} \mathbf{1}(\mathcal{S} \cap \widehat{\mathcal{R}}_t^{(K)} \neq \emptyset).
% \end{equation}

Let $t_0$ be the anomaly onset, $T_a$ its duration, and $t^*$ the first timestamp where any ground-truth sensor $j \in \mathcal{S}$ appears in the Top-$K$ set $\widehat{\mathcal{R}}_t^{(K)}$. We define these metrics as:
\begin{equation}
E = \max\left(0, 1 - \frac{t^* - t_0}{T_a}\right), \qquad A = \frac{1}{T_a} \sum_{t=t_0}^{t_0 + T_a - 1} \mathbf{1}(\mathcal{S} \cap \widehat{\mathcal{R}}_t^{(K)} \neq \emptyset).
\end{equation}

\paragraph{Harmonic Combined Temporal Score (TemporalHM).}
To ensure responsiveness and stability are treated as mutually necessary, we propose the \textbf{TemporalHM}, generalized via the $F_\beta$ form:
\begin{equation}
\mathrm{TemporalHM}_\beta = \frac{(1+\beta^2)EA}{\beta^2 E + A + \varepsilon}
\end{equation}
where $\varepsilon$ provides numerical stability and $\beta$ allows prioritizing latency ($\beta > 1$) or persistence ($\beta < 1$). This formulation heavily penalizes methods that achieve high early scores but fail to maintain attribution consistency, or vice-versa.

\section{Experiments}

We evaluate the proposed RCA framework to answer the following questions:

% \begin{itemize}
%     \item \textbf{RCA Accuracy:} Can the method correctly identify the true root cause sensors (Top@KR)?
%     \item \textbf{Attribution Confidence:} Does the method concentrate attribution mass on the true causes (CW-RCS@K)?
%     \item \textbf{Temporal Localization:} How early and consistently are root causes detected during the anomaly period (TemporalHM)?
%     \item \textbf{Model Generality:} Does the framework remain effective across different anomaly detection models?
%     %\item \textbf{Efficiency:} Does contextual retrieval reduce attribution computation cost?
% \end{itemize}

\begin{itemize}
    \item \textbf{RQ1 (RCA Accuracy):} To what extent can the framework accurately identify the ground-truth root cause sensors (measured by Top@$K$R)?
    \item \textbf{RQ2 (Attribution Confidence):} Does the method effectively concentrate attribution mass on true causal variables, reducing noise in the explanation (measured by CW-RCS@K)?
    \item \textbf{RQ3 (Temporal Dynamics):} How early and consistently does the framework localize root causes throughout the duration of an anomaly (measured by TemporalHM)?
    \item \textbf{RQ4 (Model Agnosticism):} Is the framework robust and effective when integrated with diverse underlying anomaly detection architectures?
\end{itemize}

The following subsections describe the datasets, models and baselines, and implementation details.

\subsection{Datasets}
We evaluate our framework on two widely-used benchmarks: SWaT (Secure Water Treatment) \cite{xie2020multivariateswat}, containing 51 sensors from a continuous industrial process with 11 days of operation, and MSDS \cite{nedelkoski2020multi} (Multi-Source Distributed System), a high-dimensional dataset capturing 10 metrics across 12 nodes in a distributed computing environment. In our experiments we utilize system metrics modality.

%We evaluate our framework on two widely-used benchmarks: SWaT (Secure Water Treatment) \cite{xie2020multivariateswat}, containing 51 sensors from a continuous 11-day industrial process, and MSDS (Multi-Source Distributed System) \cite{nedelkoski2020multi}, a high-dimensional dataset capturing 10 metrics across 12 nodes. In our experiments, we utilize the system metrics modality.

% We evaluate our method on two widely used multivariate time-series anomaly detection benchmarks: \textbf{SWaT} \cite{xie2020multivariateswat} and \textbf{MSDS}, both representing real-world cyber-physical systems.

% \paragraph{SWaT.}
% The Secure Water Treatment (SWaT) dataset \cite{xie2020multivariateswat} is collected from a scaled industrial water treatment plant consisting of six interconnected process stages. The dataset contains 51 sensors and actuators recorded at 1 Hz. We evaluate our method on the attack phase, which includes 36 labeled cyber-physical attack scenarios with precise start and end timestamps.

% \paragraph{MSDS.}
% The Multi-Source Distributed System (MSDS) dataset \cite{nedelkoski2020multi} contains monitoring data collected from an OpenStack-based distributed system. It provides synchronized multi-source observability data including system metrics, logs, and distributed traces generated under controlled workloads and injected failures. The dataset is designed for operational analytics tasks such as anomaly detection and root cause analysis in complex distributed systems. In our experiments we utilize the system metrics modality.

\subsection{Models and Baselines}

\paragraph{Backbone Models:} To demonstrate the detector-agnostic nature of the proposed RCA framework,we employ a diverse suite of anomaly detection architectures, including Autoencoder (AE) \cite{hinton2006reducing}, Variational Autoencoder (VAE) \cite{kingma2013auto}, LSTM-based detectors \cite{jacob2020exathlon}, TCN \cite{bai2018tcn}, and Transformer-based \cite{xu2022anomaly} models. This selection ensures the attribution framework is tested against varying inductive biases and internal representations.

\paragraph{RCA Baseline:} We compare our approach against representative RCA baselines spanning both causal inference and attribution-based methods: $\epsilon$-Diagnosis \cite{shan2019diagnosis}, RCD \cite{ikram2022root}, CIRCA \cite{li2022causal}, and AERCA \cite{han2025root} for causal root cause localization, as well as KernelSHAP \cite{lundberg2017unifiedshap} and ShaTS \cite{de2025shats} for feature attribution-based explanations. 

\paragraph{Proposed Variants:} We evaluate two framework variants:\textbf{CondAttr-VAE} uses a learned VAE latent space for neighborhood search, while \textbf{CondAttr-UMAP} leverages a UMAP manifold embedding. This design ensures the framework remains model-agnostic and decoupled from the specific anomaly detection backbone. Refer to Appendix A for implementation and hyperparameter details.

\subsection{Implementation Details}

All sensor variables are normalized using standard z-score normalization. Sliding windows of length $w=50$ are constructed to capture temporal system dynamics. Anomaly detection models are trained using only normal operation data.

For contextual retrieval, the neighborhood size is fixed to nearest contextual neighbor ($K=3$). In the VAE variant, the latent representation is learned using a bottleneck dimension of $z=8$, while the UMAP variant constructs a low-dimensional manifold embedding of the input windows for neighbor search.

Experiments are conducted in PyTorch on an NVIDIA RTX A6000 GPU; see Appendix B for hyperparameter studies.

\section{Results and Analysis}

\subsection{Root Cause Identification Performance}
\begin{table}[!ht]
\centering
\small
\setlength{\tabcolsep}{1.5pt}
\renewcommand{\arraystretch}{1}
\caption{\textbf{Root Cause Identification Performance}. Top-$K$ Recall reported per dataset. \textbf{Best} and \underline{second-best} results are highlighted.}
\begin{tabular}{lcccccc}
\toprule
\textbf{Method}
& \multicolumn{3}{c}{\textbf{SWaT}}
& \multicolumn{3}{c}{\textbf{MSDS}} \\
\cmidrule(lr){2-4}\cmidrule(lr){5-7}
& \textbf{Top@3R} & \textbf{Top@5R} & \textbf{Top@10R}
& \textbf{Top@3R} & \textbf{Top@5R} & \textbf{Top@10R} \\
\midrule
$\epsilon$-Diagnosis & 0.125 & 0.125 & 0.375 & 0.266 & 0.452 & 1.000 \\
RCD                 & 0.000 & 0.000 & 0.300 & 0.573 & 0.984 & 1.000 \\
CIRCA               & 0.000 & 0.000 & 0.300 & 0.860 & 0.917 & 1.000 \\
AERCA               & 0.290 & 0.330 & 0.455 & 0.908 & 0.974 & 1.000 \\
KernelSHAP          & 0.055 & 0.064 & 0.138 & 0.311 & 0.467 & 1.000 \\
ShaTS               & 0.393 & 0.513 & 0.601 & 0.915 & 0.986 & 1.000 \\
\rowcolor{Aquamarine!10} CondAttr-VAE               & \textbf{0.537} & \textbf{0.601} & \textbf{0.694} & \underline{0.948} & \textbf{1.000} & 1.000 \\
\rowcolor{Aquamarine!10} CondAttr-UMAP              & \underline{0.481} & \underline{0.523} & \underline{0.638} & \textbf{0.956} & \textbf{1.000} & 1.000 \\
\bottomrule
\end{tabular}
\label{tab:rci_performance_transposed}
\end{table}

As shown in \Cref{tab:rci_performance_transposed}, \textbf{CondAttr-UMAP} and \textbf{CondAttr-VAE} consistently outperform all baselines across both benchmarks. On SWaT, our manifold-guided approaches provide a significant lift over the strongest baseline (ShaTS), with improvements of up to \textbf{36.64\%} in Top-3 Recall. Similar gains on the high-dimensional MSDS dataset underscore the scalability of our contextual retrieval strategy. These results empirically validate that conditioning attributions on the learned system manifold, rather than using marginal perturbations, better preserves inter-sensor dependencies and reduces out-of-distribution artifacts, leading to more precise root-cause localization.

Additional comparisons between conditional and unconditional retrieval strategies are provided in Appendix~B.6, further illustrating the benefits of conditioning the reference set on the anomalous context.
%\Cref{tab:rci_performance_transposed} shows that the proposed methods \textbf{CondAttr-VAE} and \textbf{CondAttr-UMAP} consistently outperform existing RCA approaches. On SWaT, CondAttr-UMAP improves Top@3 recall by approximately \textbf{22.39\%} over the strongest attribution baseline (ShaTS), while CondAttr-VAE improves Top@10 recall by \textbf{12.31\%}. Similar improvements of \textbf{X\%} are observed on MSDS across Top-$K$ metrics. These results indicate that contextual conditional attribution better preserves system dependencies compared to marginal perturbation-based explanations, leading to more accurate root cause identification.

\subsection{Confidence-Aware Evaluation}
\Cref{tab:confidence_aware_evaluation} evaluates attribution quality via the proposed \textit{CW-RCS@K} metric, denoted as \textit{CW@K} in the table for brevity. \textbf{CondAttr-UMAP} consistently outperforms the strongest baseline,  \cite{de2025shats}, nearly doubling the scores across all $K$ on SWaT \cite{xie2020multivariateswat}. While absolute scores are lower than standard Recall@$K$ due to the strictness of confidence weighting, the substantial performance gap confirms that our conditional framework effectively concentrates attribution mass on ground-truth root causes. Unlike marginal baselines that disperse importance across spurious correlations, our manifold-guided approach enhances the \textbf{separability} of root causes from background noise, providing more decisive and operationally actionable explanations.

% Metric comparisons.
\begin{table}[!ht]
\centering
\small
\setlength{\tabcolsep}{1pt}
\renewcommand{\arraystretch}{1}
\caption{\textbf{Confidence-Aware Evaluation}. CW-RCS@$K$ per dataset. \textbf{Best} and \underline{second-best} results are highlighted.}
\begin{tabular}{lcccccc}
\toprule
\textbf{Method}
& \multicolumn{3}{c}{\textbf{SWaT}}
& \multicolumn{3}{c}{\textbf{MSDS}} \\
\cmidrule(lr){2-4}\cmidrule(lr){5-7}
& \textbf{  CW@3} & \textbf{CW@5} & \textbf{CW@10  }
& \textbf{  CW@3} & \textbf{CW@5} & \textbf{CW@10} \\
\midrule
KernelSHAP & 0.004 & 0.004 & 0.009 & 0.050 & 0.093 & 0.144 \\
ShaTS      & 0.122 & 0.134 & 0.141 & 0.462 & 0.489 & 0.517 \\
\rowcolor{Aquamarine!10} CondAttr-VAE      & \textbf{0.245} & \textbf{0.253} & \underline{0.257} & \underline{0.551} & \underline{0.607} & \underline{0.664} \\
\rowcolor{Aquamarine!10} CondAttr-UMAP     & \underline{0.243} & \underline{0.250} & \textbf{0.258} & \textbf{0.569} & \textbf{0.624} & \textbf{0.668} \\
\bottomrule 
\end{tabular}
\label{tab:confidence_aware_evaluation}
\end{table}

\subsection{Cross-Model Explanation Consistency}
Table \ref{tab:cross_model_consistency} evaluates the robustness of our framework across diverse anomaly detection backbones.
%, including stochastic (VAE/AE), recurrent (LSTM), and attention-based (Transformer) models. 
While marginal attribution methods exhibit high variance and poor confidence scores, \textbf{CondAttr-UMAP} maintains superior performance across all detectors. Notably, on high-capacity models like the Transformer and TCN, our method achieves nearly double the CW@3 scores compared to ShaTS \cite{de2025shats}. This consistency suggests that by decoupling the explanation logic from the detector’s internal parameters and grounding it in the system's normal manifold, we mitigate model-specific biases. Our approach thus provides a model-agnostic layer of reliability, ensuring that root cause attributions remain stable regardless of the underlying detection architecture.
\begin{table}[t]
\centering
\small
\setlength{\tabcolsep}{3pt}
\renewcommand{\arraystretch}{1.05}
\caption{\textbf{Cross-model Consistency}. Top-3 Recall and confidence-weighted recall (CW-RCS@3) across anomaly detection backbones. \textbf{Best} and \underline{second-best} results are highlighted.}
\resizebox{\linewidth}{!}{%
\begin{tabular}{lcccccccccc}
\toprule
\textbf{Method}
& \multicolumn{2}{c}{\textbf{VAE}}
& \multicolumn{2}{c}{\textbf{AE}}
& \multicolumn{2}{c}{\textbf{LSTM}}
& \multicolumn{2}{c}{\textbf{TCN}}
& \multicolumn{2}{c}{\textbf{Transformer}} \\
\cmidrule(lr){2-3}
\cmidrule(lr){4-5}
\cmidrule(lr){6-7}
\cmidrule(lr){8-9}
\cmidrule(lr){10-11}
& \textbf{Top@3R} & \textbf{CW@3}
& \textbf{Top@3R} & \textbf{CW@3}
& \textbf{Top@3R} & \textbf{CW@3}
& \textbf{Top@3R} & \textbf{CW@3}
& \textbf{Top@3R} & \textbf{CW@3} \\
\midrule
KernelSHAP & 0.055 & 0.004 & 0.097 & 0.038 & 0.041 & 0.006 & 0.046 & 0.017 & 0.092 & 0.011 \\
ShaTS      & 0.393 & 0.122 & 0.332 & 0.158 & 0.407 & 0.126 & 0.462 & 0.138 & 0.475 & 0.196 \\
\rowcolor{Aquamarine!10} CondAttr-VAE      & \textbf{0.537} & \textbf{0.245} & \textbf{0.425} & \underline{0.216} & \textbf{0.490} & \textbf{0.229} & \underline{0.555} & \underline{0.284} & \textbf{0.564} & \textbf{0.325} \\
\rowcolor{Aquamarine!10} CondAttr-UMAP   & \underline{0.481} & \underline{0.238} & \underline{0.402} & \textbf{0.222}  & \underline{0.462} & \underline{0.213} & \textbf{0.569} & \textbf{0.342} & \underline{0.518} & \underline{0.303} \\
\bottomrule
\end{tabular}%
}
\label{tab:cross_model_consistency}
\end{table}

\subsection{Temporal Localization Analysis}
As shown in \Cref{tab:temporal_localization}, our manifold-guided strategies lead in \textbf{TemporalHM} (denoted \textit{TempHM} in tables), with the UMAP variant achieving the highest fidelity. The substantial margin over ShaTS \cite{de2025shats}, particularly at $K=3$, underscores our framework's capacity to minimize localization latency while ensuring persistence. Unlike marginal perturbations that often yield transient or "flickering" attributions, our conditional approach stabilizes explanations by grounding them in the system's learned normal manifold. This ensures root causes are identified promptly at onset and tracked reliably as the anomaly evolves, providing the stability required for real-time industrial monitoring.
%\vspace{-1em}
\begin{table}[!ht]
\centering
\small
\setlength{\tabcolsep}{1pt}
\renewcommand{\arraystretch}{1}
\caption{\textbf{Temporal Localization Analysis}. \textbf{Best} and \underline{second-best} results are highlighted.}
\begin{tabular}{llccc}
\toprule
\textbf{Dataset} & \textbf{Method} & \textbf{TempHM@3} & \textbf{TempHM@5} & \textbf{TempHM@10} \\
\midrule
\multirow{4}{*}{\textbf{SWaT}}
& KernelSHAP & 0.064 & 0.078 & 0.171 \\
& ShaTS      & 0.422 & 0.519 & 0.562 \\
& \cellcolor{Aquamarine!10} CondAttr-VAE      & \cellcolor{Aquamarine!10}\underline{0.503} & \cellcolor{Aquamarine!10}\underline{0.562} & \cellcolor{Aquamarine!10}\textbf{0.650} \\
& \cellcolor{Aquamarine!10} CondAttr-UMAP     & \cellcolor{Aquamarine!10}\textbf{0.504} & \cellcolor{Aquamarine!10}\textbf{0.568} & \cellcolor{Aquamarine!10}\underline{0.629} \\
\bottomrule
\end{tabular}
\label{tab:temporal_localization}
\end{table}
%\vspace{-1em}

%\subsection{Representation Impact Study}
% VAE vs UMAP comparison.

\section{Industrial Case Study: Blast Furnace Monitoring}

To assess the practical utility of the proposed framework, we conduct an industrial case study on real-world blast furnace monitoring data. The study evaluates root cause attribution performance in a complex thermo-chemical manufacturing environment under domain-expert supervision.

% \subsection{System Overview and Data Description}

% We evaluate the proposed framework in collaboration with Paul Wurth using data from an operational blast furnace used in large-scale steel manufacturing. The furnace represents a complex thermo-chemical system instrumented with over 100 heterogeneous sensors deployed across multiple structural levels, monitoring critical variables such as temperature, gas composition, pressure, and reaction dynamics throughout the production cycle.

% The dataset comprises approximately 2 million time-stamped observations spanning multiple operating cycles. Due to industrial confidentiality constraints, the data cannot be publicly released. Data curation was performed in close collaboration with domain experts, including sensor validation, noise filtering, channel synchronization, and process-level feature interpretation, ensuring physically consistent inputs for downstream anomaly detection and root cause analysis.

\subsection{System Overview and Data Description}
We validate our framework in collaboration with \textbf{Paul Wurth} using data from an operational blast furnace. This complex thermo-chemical system is monitored by over 100 heterogeneous sensors (temperature, gas composition, pressure) across multiple structural levels. The dataset contains 2 million observations over several production cycles. While industrial confidentiality precludes public release, the data underwent rigorous curation, including sensor validation, noise filtering, and synchronization, in close coordination with domain experts to ensure physically consistent inputs for root cause analysis.

% \subsection{Data Preparation and Modeling Pipeline}

% Following curation, standard preprocessing steps were applied, including normalization, missing-value handling, and temporal alignment across sensor channels. Sliding windows were constructed to capture process dynamics, with window length selected in consultation with domain experts based on typical furnace reaction and material transit times.

% We trained reconstruction-based anomaly detectors, including autoencoder (AE) and variational autoencoder (VAE) models, on normal operating data. Both models achieved stable reconstruction behavior and reliable anomaly sensitivity, providing the detection backbone required for downstream root cause attribution.

\subsection{Modeling Pipeline}
Following standard preprocessing (normalization and alignment), we constructed sliding windows with lengths selected in consultation with domain experts to match furnace reaction and material transit times. We trained reconstruction-based detectors, specifically Autoencoder (AE) and VAE models, on normal operating data. Both architectures achieved stable reconstruction and the necessary sensitivity for downstream RCA, serving as the model-agnostic backbones for our attribution framework.

% \subsection{Operational Need for Root Cause Analysis}

% While anomaly detection can flag abnormal furnace behavior, operational intervention requires identifying the causal sensors driving the deviation. Blast furnace production cycles span several hours, and late-stage anomaly discovery can result in material loss, energy waste, and unplanned downtime. We therefore applied the proposed root cause attribution framework to detected anomalies to identify sensors deviating from their conditional operating patterns, enabling earlier diagnosis and more targeted corrective action within the manufacturing process.

\subsection{Operational Requirements at Paul Wurth}
For Paul Wurth, anomaly detection is only actionable if it identifies the specific causal sensors driving a deviation. Given that blast furnace thermo-chemical cycles span several hours, late-stage discovery leads to irreversible material loss and significant energy waste. Our framework addresses this by isolating sensors deviating from their \textit{context-dependent} patterns. This provides operators with precise localization needed for targeted intervention during the critical early phases of a process deviation, transforming a binary alert into a diagnostic insight.

% \subsection{RCA Validation and Stress Testing}

% We evaluated the proposed RCA framework on historical furnace anomalies and verified the identified root cause sensors against expert annotations and process logs. The method consistently localized sensors exhibiting abnormal deviations from their normal operational context. 

% To further validate attribution reliability, we conducted controlled stress tests by injecting synthetic anomalies into selected sensors and correlated sensor groups within normal operating data. In these scenarios, the framework accurately recovered the perturbed sensors and demonstrated the ability to detect anomalous deviations at early stages of manifestation. Representative sensor–time attribution heatmaps for both real and injected anomalies are presented in Figure~X, illustrating the interpretability and diagnostic utility of the proposed approach.

\subsection{Validation and Stress Testing}
We validated our framework against expert-annotated historical anomalies and process logs. To further quantify reliability, we conducted controlled stress tests by injecting synthetic anomalies into individual sensors and correlated groups. Across both scenarios, the framework consistently recovered the ground-truth drivers, demonstrating high sensitivity to deviations at their earliest stages. As illustrated by the attribution heatmaps in \Cref{fig:pw_graphs}, our approach provides the interpretability required for industrial diagnosis, accurately isolating causal sensors from normal system noise. Appendix C provides additional anomaly cases and experimental results.

% \begin{figure}[t]
% \centering

% \begin{subfigure}[t]{0.48\linewidth}
%     \centering
%     \includegraphics[width=\linewidth, height=3cm]{images/art_anom_01.png}
%     \caption{Real Anomaly Dataset}
%     \label{fig:left_image}
% \end{subfigure}
% \hfill
% \begin{subfigure}[t]{0.48\linewidth}
%     \centering
%     \includegraphics[width=\linewidth, height=3cm]{images/art_anom_03.png}
%     \caption{synthetic anomalies}
%     \label{fig:right_image}
% \end{subfigure}
% \label{fig:pw_graphs}
% \end{figure}

\begin{figure}[t]
    \centering
    \includegraphics[width=\linewidth]{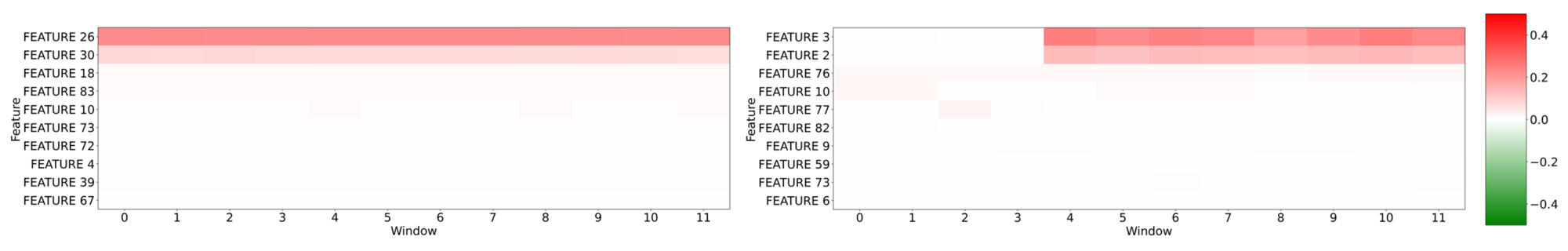}
    \caption{\textbf{Root cause feature heatmaps for representative Paul Wurth samples.} The left panel shows a real test sample with \texttt{feature26} identified as the root cause, while the right panel shows a synthetic anomaly with perturbations injected in \texttt{feature2} and \texttt{feature3} after a given time point.}
    \label{fig:pw_graphs}
\end{figure}

\subsection{Industrial Impact and Portability}
The Paul Wurth case study validates the practical utility of conditional attribution in mitigating downtime and resource loss through early diagnostic intervention. Beyond blast furnace monitoring, the framework’s detector-agnostic design ensures seamless integration with diverse anomaly detection architectures. This versatility supports broad transferability to other complex multivariate industrial systems, providing a scalable solution for high-fidelity root cause diagnosis where system dependencies are critical.

% \subsection{Industrial Impact and Transferability}

% Due to industrial confidentiality constraints, the blast furnace dataset cannot be publicly released. Nevertheless, the case study highlights the practical value of conditional attribution for diagnosing complex process anomalies and enabling earlier intervention to mitigate downtime and resource loss. As demonstrated in our experiments, the proposed framework is detector-agnostic, allowing seamless integration with diverse anomaly detection models, thereby supporting transferability across industrial monitoring and multivariate RCA scenarios.

\section{Conclusion}
In this paper, we addressed the fundamental limitations of marginal attribution in time-series RCA, which often lead to unreliable, out-of-distribution explanations. We proposed a conditional attribution framework grounded in the system's learned manifold, utilizing VAE and UMAP representations to retrieve contextually relevant normal states for counterfactual construction. To rigorously assess the utility of these explanations, we introduced two novel metrics: the confidence-weighted \textit{CW-RCS} and the stability-focused \textit{TemporalHM}. Our experiments on the SWaT and MSDS benchmarks, along with a case study on industrial blast furnace data from \textbf{Paul Wurth}, demonstrate that our approach consistently improves root-cause identification accuracy and temporal responsiveness. Qualitative evaluation by domain experts further confirms the practical utility of our approach, as the identified root causes showed strong alignment with known physical process failures. By providing high-fidelity, dependency-preserving explanations, our framework bridges the gap between complex anomaly detection models and actionable industrial diagnostics. Future work will explore the extension of this manifold-guided strategy to online, incremental learning settings where system dynamics evolve over time.

\section{Acknowledgements}
This work was partially funded by the German Federal Ministry of Research, Technology and Space (BMFTR) under Grant Agreement No. 16IW24009 (COPPER).

%
% ---- Bibliography ----
%
% BibTeX users should specify bibliography style 'splncs04'.
% References will then be sorted and formatted in the correct style.
%
\bibliographystyle{splncs04}
\bibliography{mybibliography}
%% Note that this preceding line implies that you store your BibTeX references in a file called 'mybibliography.bib'. If you instead store your references in a file with a different name, for instance 'references.bib', the preceding line should read '\bibliography{references}'. Whatever you do, DO NOT put the file name extension .bib inside the \bibliography command; this will trip up LaTeX compilers. 
%
% If you do not want to use BibTeX, you can also type up the bibliography exactly as you see fit, using the following structure:

%----Supplementary Section-------
% \documentclass[runningheads]{llncs}

% \usepackage{graphicx}
% \usepackage{amsmath}
% \usepackage{amssymb}
% \usepackage{booktabs}
% \usepackage{multirow}

% \usepackage{graphicx}
% \usepackage{subcaption}

\clearpage
\appendix

%\begin{document}
\title{Appendix:
Conditional Attribution for Root Cause Analysis in Time-Series Anomaly Detection}
\titlerunning{Conditional Attribution for Time-Series RCA}
\author{}
\institute{}

\maketitle

\section{Additional Model Details}

\subsection{Model Details}

\subsubsection{VAE Architecture and Hyperparameters}

% The variational autoencoder (VAE) is used as one of the backbone anomaly detection models in our experiments. It consists of an encoder that maps an input time-series window $\mathbf{X} \in \mathbb{R}^{T \times D}$ to the parameters of a latent Gaussian distribution, namely the mean vector $\boldsymbol{\mu}$ and log-variance vector $\log \boldsymbol{\sigma}^2$. A latent sample is then drawn using the reparameterization trick and passed through the decoder to reconstruct the input window, yielding $\hat{\mathbf{X}}$.

The variational autoencoder (VAE) employed in \textit{CondAttr-VAE} acts as the underlying latent generative model for conditional attribution. Given an input time-series window $\mathbf{X} \in \mathbb{R}^{T \times D}$, the encoder maps the input to the parameters of a latent Gaussian distribution, namely the mean vector $\boldsymbol{\mu}$ and log-variance vector $\log \boldsymbol{\sigma}^2$. A latent representation is then sampled via the reparameterization trick and passed through the decoder to obtain the reconstruction $\hat{\mathbf{X}}$.

The VAE is trained using a combined objective consisting of a reconstruction loss, a KL regularization term, and a time-axial consistency loss:
% \begin{equation}
% \mathcal{L}_{\mathrm{total}} =
% \lambda_{\mathrm{rec}} \mathcal{L}_{\mathrm{rec}}
% + \mathcal{L}_{\mathrm{KL}}
% + \mathcal{L}_{\mathrm{time}},
% \end{equation}
\begin{equation}
\mathcal{L}_{\mathrm{total}} =
\lambda_{\mathrm{rec}} \mathcal{L}_{\mathrm{rec}}
+ \lambda_{\mathrm{KL}} \mathcal{L}_{\mathrm{KL}}
+ \lambda_{\mathrm{time}} \mathcal{L}_{\mathrm{time}},
\end{equation}
where the reconstruction loss is defined as
\begin{equation}
\mathcal{L}_{\mathrm{rec}} = \sum_{t=1}^{T}\sum_{d=1}^{D} \left(X_{t,d} - \hat{X}_{t,d}\right)^2,
\end{equation}
and the KL divergence term is
\begin{equation}
\mathcal{L}_{\mathrm{KL}} =
-\frac{1}{2}\sum_{j=1}^{L}
\left(
1 + \log \sigma_j^2 - \mu_j^2 - \exp(\log \sigma_j^2)
\right),
\end{equation}
where $L$ denotes the latent dimension. To further encourage temporal consistency in the reconstruction, we include a time-axial loss defined as
\begin{equation}
\mathcal{L}_{\mathrm{time}} =
\sum_{t=1}^{T}
\left(
\frac{1}{D}\sum_{d=1}^{D} X_{t,d}
-
\frac{1}{D}\sum_{d=1}^{D} \hat{X}_{t,d}
\right)^2.
\end{equation}

Table~\ref{tab:vae_hyperparameters} summarizes the architectural and training hyperparameters of the VAE used in our experiments.

\begin{table}[!ht]
\centering
\small
\setlength{\tabcolsep}{1pt}
\renewcommand{\arraystretch}{1}
\caption{\textbf{VAE architecture and training hyperparameters.} Hyperparameter configuration of the timeVAE model used in our experiments.}
\begin{tabular}{ll}
\toprule
\textbf{Hyperparameter} & \textbf{Value} \\
\midrule
Input dimension            & 50 \\
Latent dimension             & 8 \\
Hidden layer sizes           & [50, 100, 200] \\
Reconstruction weight        & 3.0 \\
KL-Divergence weight        & 1.0 \\
Time-Axial weight        & 1.0 \\
Batch size                   & 512 \\
Maximum epochs               & 300 \\
\bottomrule
\end{tabular}
\label{tab:vae_hyperparameters}
\end{table}

\subsubsection{UMAP Architecture and Hyperparameters}

The UMAP-based variant, referred to as \textit{CondAttr-UMAP}, employs Uniform Manifold Approximation and Projection (UMAP) as the nonlinear dimensionality reduction module underlying conditional attribution. Given a multivariate time-series window of length $T$ with $D$ features, each input sample is reshaped into a flattened vector of dimension $T \times D$ before being projected into a lower-dimensional embedding space. In our implementation, UMAP is configured with \texttt{n\_neighbors = 30}, \texttt{n\_components = 10}, \texttt{min\_dist = 0.1}, and \texttt{random\_state = 42}. All remaining parameters are retained at their default settings, including the Euclidean distance metric, spectral initialization, learning rate of $1.0$, spread of $1.0$, and automatic selection of the number of optimization epochs.

\begin{table}[!ht]
\centering
\small
\setlength{\tabcolsep}{1pt}
\renewcommand{\arraystretch}{1}
\caption{\textbf{UMAP hyperparameters.} Hyperparameter configuration of the UMAP model used in our experiments.}
\begin{tabular}{ll}
\toprule
\textbf{Hyperparameter} & \textbf{Value} \\
\midrule
Output dimension       & 10 \\
Number of neighbors    & 30 \\
Minimum distance       & 0.1 \\
Distance metric        & Euclidean \\
Initialization         & Spectral \\
Learning rate          & 1.0 \\
Spread                 & 1.0 \\
Optimization epochs    & Automatic \\
Random state           & 42 \\
\bottomrule
\end{tabular}
\label{tab:umap_hyperparameters}
\end{table}

\subsection{Anomaly Detection Scores}

Table~\ref{tab:ad_performance} reports the anomaly detection performance of different backbone models on the SWaT dataset.

\begin{table}[!ht]
\centering
\small
\setlength{\tabcolsep}{1pt}
\renewcommand{\arraystretch}{1}
\caption{\textbf{Anomaly detection performance on the SWaT dataset} across different backbone models.}
\begin{tabular}{lcccc}
\toprule
\textbf{Model} & \textbf{Precision} & \textbf{Recall} & \textbf{F1-Score} & \textbf{ROC-AUC} \\
\midrule
VAE         & 0.952 & 0.901 & 0.926 & 0.954 \\
AE          & 0.960 & 0.910 & 0.934 & 0.962 \\
LSTM        & 0.979 & 0.923 & 0.948 & 0.980 \\
TCN         & 0.989 & 0.961 & 0.975 & 0.990 \\
Transformer & 0.969 & 0.949 & 0.959 & 0.983 \\
\bottomrule
\end{tabular}
\label{tab:ad_performance}
\end{table}

\section{Ablation}

% \subsection{Inference Time}

% Table~\ref{tab:inference_time} reports the inference time comparison across different explanation methods.

% \begin{table}[!ht]
% \centering
% \small
% \setlength{\tabcolsep}{1pt}
% \renewcommand{\arraystretch}{1}
% \begin{tabular}{lccc}
% \toprule
% \textbf{Method} & \textbf{SWAT} & \textbf{MSDS} & \textbf{Industrial Dataset} \\
% \midrule
% KernelSHAP & fill & fill & fill \\
% ShaTS      & fill & fill & fill \\
% MyVAE      & fill & fill & fill \\
% MyUMAP     & fill & fill & fill \\
% \bottomrule
% \end{tabular}
% \caption{\textbf{Inference time comparison across explanation methods.} Values denote the average inference time per anomalous window.}
% \label{tab:inference_time}
% \end{table}

\subsection{Additional Dataset Details}

We evaluate our method on two real-world multivariate time-series benchmarks:

\textbf{SWaT} (Secure Water Treatment) is derived from a scaled-down but high-fidelity six-stage water treatment testbed. The SWaT.A1\_Dec 2015 release consists of 11 days of continuous operation, of which 7 days correspond to normal operation and 4 days contain attack scenarios. The dataset provides labeled measurements from 51 sensors and actuators, and includes 41 attacks during the abnormal period.

\textbf{MSDS} (Multi-Source Distributed System) is collected from a complex distributed OpenStack environment and is designed to facilitate AIOps tasks, including anomaly detection and root cause analysis. It comprises multi-source observability data, including metrics, logs, and distributed traces, along with workload and fault scripts that provide ground truth. In our experiments, we use the system metrics modality and adopt the benchmark setting with 10 variables.

\begin{table}[!ht]
\centering
\small
\setlength{\tabcolsep}{8pt}
\renewcommand{\arraystretch}{1}
\caption{\textbf{Summary of dataset statistics.} Number of features corresponds to the number of columns in each multivariate time-series dataset.}
\begin{tabular}{lcc}
\toprule
\textbf{Dataset} & \textbf{Features} & \textbf{Timesteps} \\
\midrule
SWaT & 51 & 49,500 \\
MSDS & 10 & 29,268 \\
\bottomrule
\end{tabular}
\label{tab:dataset_stats}
\end{table}

\subsection{Inference Time}

Table~\ref{tab:inference_time} presents a comparison of explanation methods in terms of both computational efficiency and localization performance. Specifically, for each dataset, we report the average inference time (in seconds) required to explain a single anomalous window, along with the corresponding Top-$K$ Recall. While inference time reflects the practical scalability of an explanation method, Top-$K$ Recall indicates its ability to recover the true anomalous variables among the highest-ranked explanations.

\begin{table}[!ht]
\centering
\small
\setlength{\tabcolsep}{6pt}
\renewcommand{\arraystretch}{1}
\caption{\textbf{Comparison of inference latency and Top@3R across explanation methods.} We report the mean inference time (seconds per window) and Top@3R accuracy for both benchmarks.}
\begin{tabular}{lcc|cc}
\toprule
\multirow{2}{*}{\textbf{Method}} 
& \multicolumn{2}{c|}{\textbf{SWAT}} 
& \multicolumn{2}{c}{\textbf{MSDS}} \\
\cmidrule(lr){2-3}\cmidrule(lr){4-5}
& \textbf{Time (s)} & \textbf{Top@3R}
& \textbf{Time (s)} & \textbf{Top@3R} \\
\midrule
KernelSHAP        & 18.653 & 0.055 & 0.402 & 0.311 \\
ShaTS             & 21.822 & 0.393 & 0.441 & 0.915 \\
CondAttr-VAE      & 21.952 & 0.537 & 0.461 & 0.948 \\
CondAttr-UMAP     & 22.011 & 0.481 & 0.475 & 0.956 \\
\bottomrule
\end{tabular}
\label{tab:inference_time}
\end{table}

\subsection{Window Size Impact}

Table~\ref{tab:window_size_impact} reports the sensitivity of the proposed conditional attribution methods to the choice of input window size. Specifically, we evaluate CondAttr-VAE and CondAttr-UMAP using Top@3R, CW-RCS@3, and TemporalHM@3 over different temporal window lengths. The results indicate that explanation performance is generally higher for smaller or intermediate window sizes, whereas larger windows tend to degrade root cause localization quality. This behavior suggests a trade-off between capturing sufficient temporal context and preserving attribution specificity. Based on this trade-off, we choose a window size of 50 for reporting the main results in the paper, as it represents a moderate setting that balances temporal context and localization performance. The remaining ablation results for shorter and longer window sizes are provided here for completeness.

\begin{table}[!ht]
\centering
\small
\setlength{\tabcolsep}{1pt}
\renewcommand{\arraystretch}{1}
\caption{\textbf{Sensitivity analysis of RCA performance across varying window sizes.}}
\begin{tabular}{c|ccc|ccc}
\toprule
\multirow{2}{*}{\textbf{Window Size}} 
& \multicolumn{3}{c|}{\textbf{CondAttr-VAE}} 
& \multicolumn{3}{c}{\textbf{CondAttr-UMAP}} \\
\cline{2-7}
& Top@3R & CW@3 & TempHM@3 & Top@3R & CW@3 & TempHM@3 \\
\midrule
5   & 0.592 & 0.318 & 0.436 & 0.523 & 0.312 & 0.413 \\
10  & 0.574 & 0.320 & 0.446 & 0.564 & 0.327 & 0.481 \\
20  & 0.509 & 0.275 & 0.437 & 0.513 & 0.318 & 0.465 \\
50  & 0.537 & 0.245 & 0.503 & 0.481 & 0.243 & 0.504 \\
100 & 0.435 & 0.196 & 0.397 & 0.453 & 0.236 & 0.479 \\
200 & 0.287 & 0.193 & 0.317 & 0.393 & 0.189 & 0.412 \\
500 & 0.287 & 0.177 & 0.279 & 0.425 & 0.239 & 0.410 \\
\bottomrule
\end{tabular}
\label{tab:window_size_impact}
\end{table}

\subsection{Attribution Sizes Impact}

Table~\ref{tab:attribution_size_impact} presents the sensitivity of CondAttr-VAE and CondAttr-UMAP to the attribution sample size. We report Top@3R, CW-RCS@3, and TemporalHM@3 for varying numbers of attribution samples. The results show that increasing the attribution size from very small values leads to initial improvements, particularly for CondAttr-VAE. Beyond this range, however, the gains become marginal, indicating a clear performance plateau for larger attribution sizes. This suggests that moderate attribution sizes are sufficient for obtaining reliable explanations, while further increases offer limited additional benefit.

\begin{table}[!ht]
\centering
\small
\setlength{\tabcolsep}{1pt}
\renewcommand{\arraystretch}{1}
\caption{\textbf{Impact of attribution size on RCA performance.} Results for CondAttr-VAE and CondAttr-UMAP across different attribution sizes, evaluated using Top@3R, CW@3, and TempHM@3. Performance generally stabilizes for larger attribution sizes.}
\begin{tabular}{c|ccc|ccc}
\toprule
\textbf{Attribution Size}
& \multicolumn{3}{c|}{\textbf{CondAttr-VAE}}
& \multicolumn{3}{c}{\textbf{CondAttr-UMAP}} \\
\cline{2-7}
& Top@3R & CW@3 & TempHM@3 & Top@3R & CW@3 & TempHM@3 \\
\midrule
1   & 0.472 & 0.235 & 0.446 & 0.468 & 0.237 & 0.484 \\
2   & 0.513 & 0.246 & 0.489 & 0.476 & 0.241 & 0.496 \\
3   & 0.537 & 0.245 & 0.503 & 0.481 & 0.243 & 0.504 \\
4   & 0.537 & 0.241 & 0.504 & 0.482 & 0.244 & 0.504 \\
5   & 0.537 & 0.239 & 0.499 & 0.493 & 0.244 & 0.501 \\
10  & 0.550 & 0.243 & 0.467 & 0.508 & 0.246 & 0.486 \\
20  & 0.550 & 0.238 & 0.470 & 0.509 & 0.246 & 0.478 \\
50  & 0.550 & 0.241 & 0.461 & 0.511 & 0.245 & 0.468 \\
\bottomrule
\end{tabular}
\label{tab:attribution_size_impact}
\end{table}

\subsection{Input vs Representation-Space Retrieval}

In high-dimensional sensor data, nearest-neighbor retrieval in the \textbf{raw input space} is unreliable because distance metrics (e.g., Euclidean) treat all features uniformly, causing irrelevant sensor fluctuations to distort similarity relationships and produce noisy neighborhoods.

To address this, we perform retrieval in \textbf{learned representation spaces}. A Variational Autoencoder (VAE) maps the input $x \in \mathbb{R}^d$ to a lower-dimensional latent representation $z \in \mathbb{R}^k$, capturing dominant system factors while suppressing noise. We further apply UMAP to preserve the \textbf{local manifold structure} of the latent space.

Retrieval quality is evaluated using \textit{CW-RCS@3}. As shown in Fig.~\ref{fig:retrieval_space_comparison}, representation space retrieval (VAE latent and UMAP manifold) significantly outperforms raw-input retrieval, indicating that learned representations better capture the \textit{intrinsic system manifold} and yield more consistent nearest neighbors for anomaly analysis.

\begin{figure}[!ht]
\centering
\includegraphics[width=0.7\linewidth]{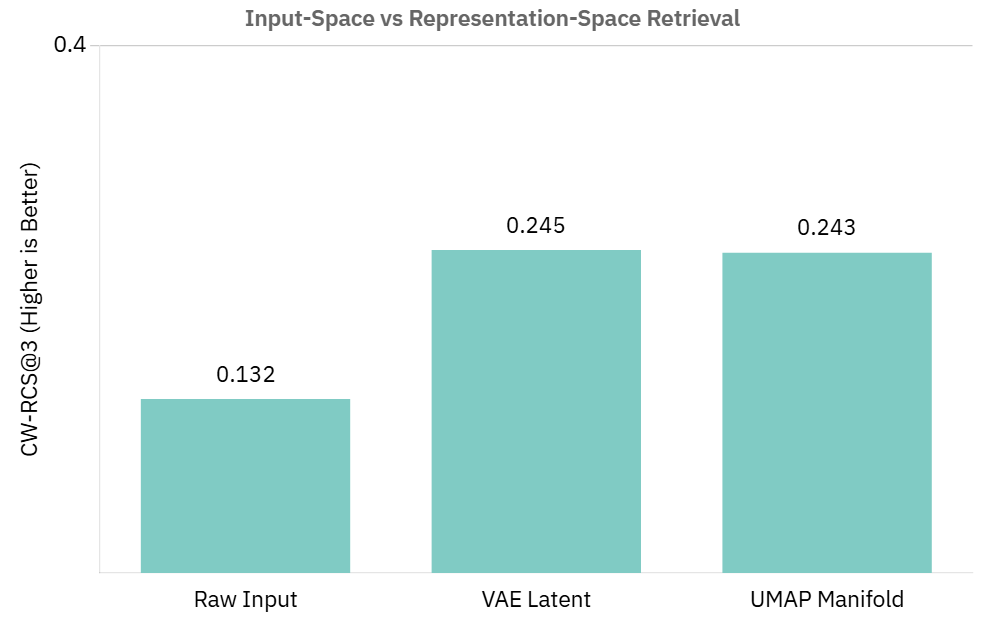}
\caption{Comparison of retrieval performance across spaces. Representation-space retrieval (VAE latent and UMAP manifold) achieves higher CW-RCS than raw input space.}
\label{fig:retrieval_space_comparison}
\end{figure}

\subsection{Unconditional vs Conditional Retrieval}

We study the impact of retrieval context on anomaly attribution.  
\textbf{Unconditional retrieval} constructs the reference set using $K$ randomly sampled normal windows (global baseline), ignoring the current operating regime. This mixes heterogeneous system states and leads to noisy attribution.

\textbf{Conditional retrieval} instead selects $K$ reference samples similar to the \emph{non-anomalous sensors} of the current sample, conditioning the baseline on the current system state.

As shown in Fig.~\ref{fig:conditional_vs_unconditional}, unconditional retrieval produces diffuse heatmaps with multiple false positives, whereas conditional retrieval yields a cleaner attribution with a localized root-cause sensor.

\begin{figure}[!ht]
\centering
\includegraphics[width=0.9\linewidth]{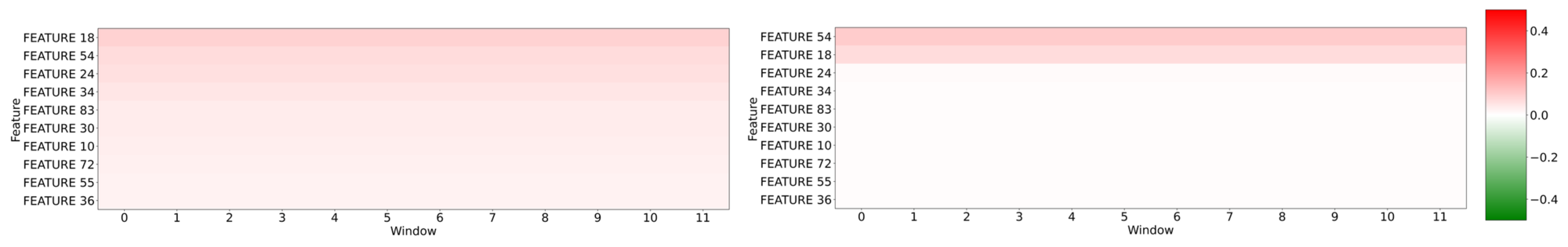}
\caption{\textbf{Unconditional versus conditional retrieval.} Conditional retrieval produces cleaner and more localized attributions by focusing on more relevant reference patterns.}
\label{fig:conditional_vs_unconditional}
\end{figure}

% In supp.tex or paper.tex
\begin{figure*}[!ht]
    \centering

    \begin{subfigure}[t]{0.32\textwidth}
        \centering
        \includegraphics[width=\textwidth]{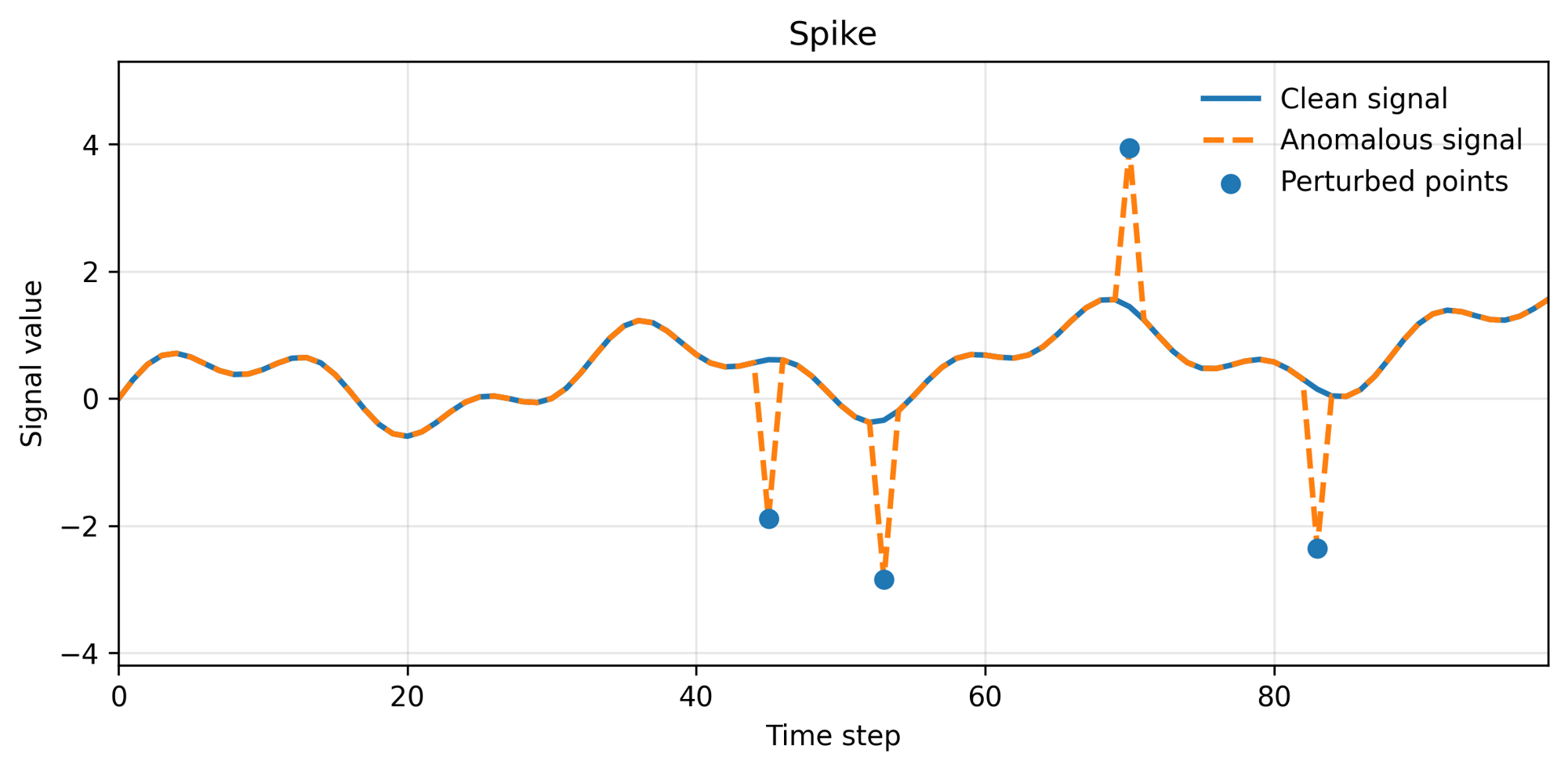}
        \caption{Spike anomaly}
        \label{fig:anomaly_spike}
    \end{subfigure}
    \hfill
    \begin{subfigure}[t]{0.32\textwidth}
        \centering
        \includegraphics[width=\textwidth]{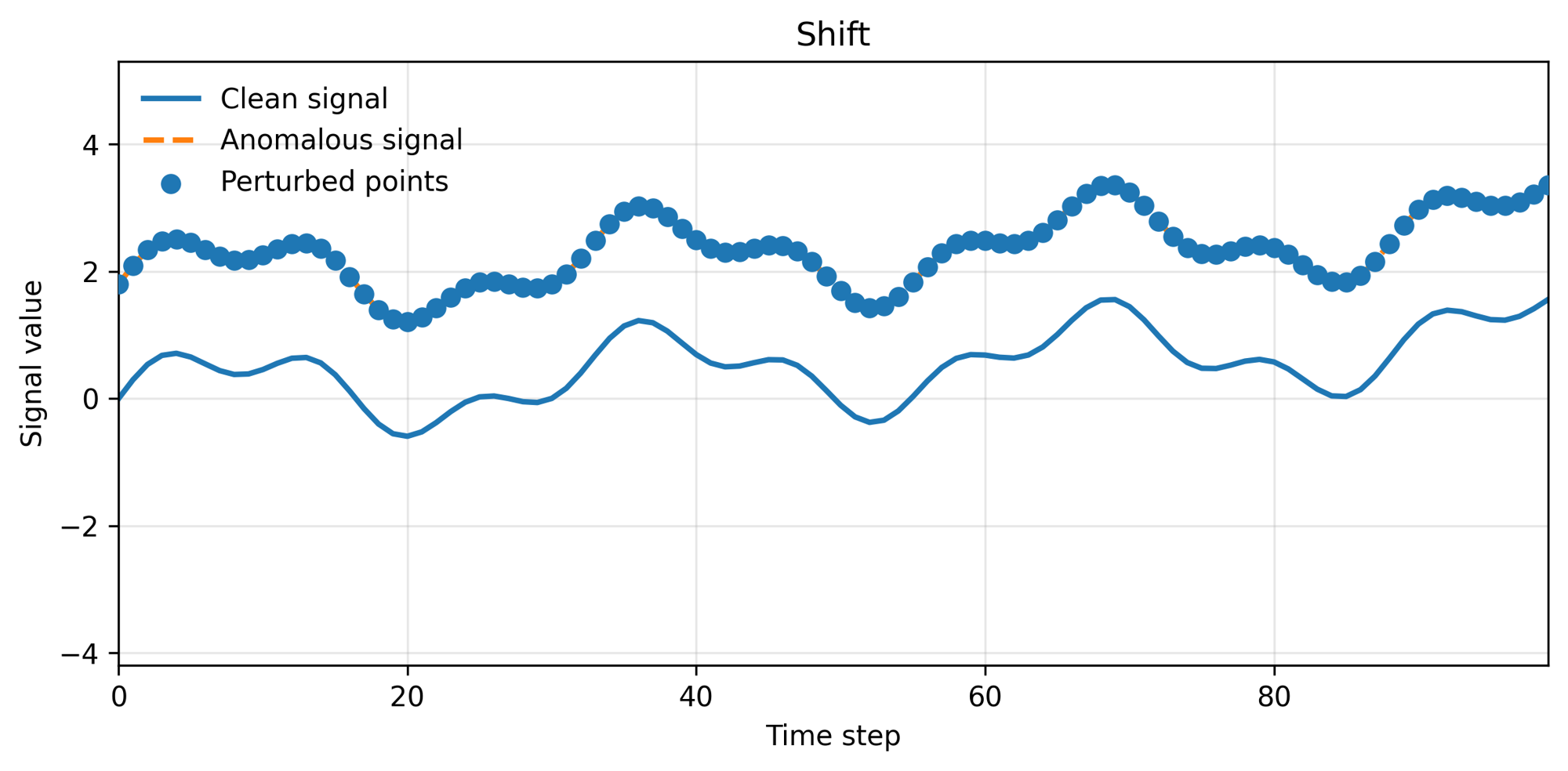}
        \caption{Shift anomaly}
        \label{fig:anomaly_shift}
    \end{subfigure}
    \hfill
    \begin{subfigure}[t]{0.32\textwidth}
        \centering
        \includegraphics[width=\textwidth]{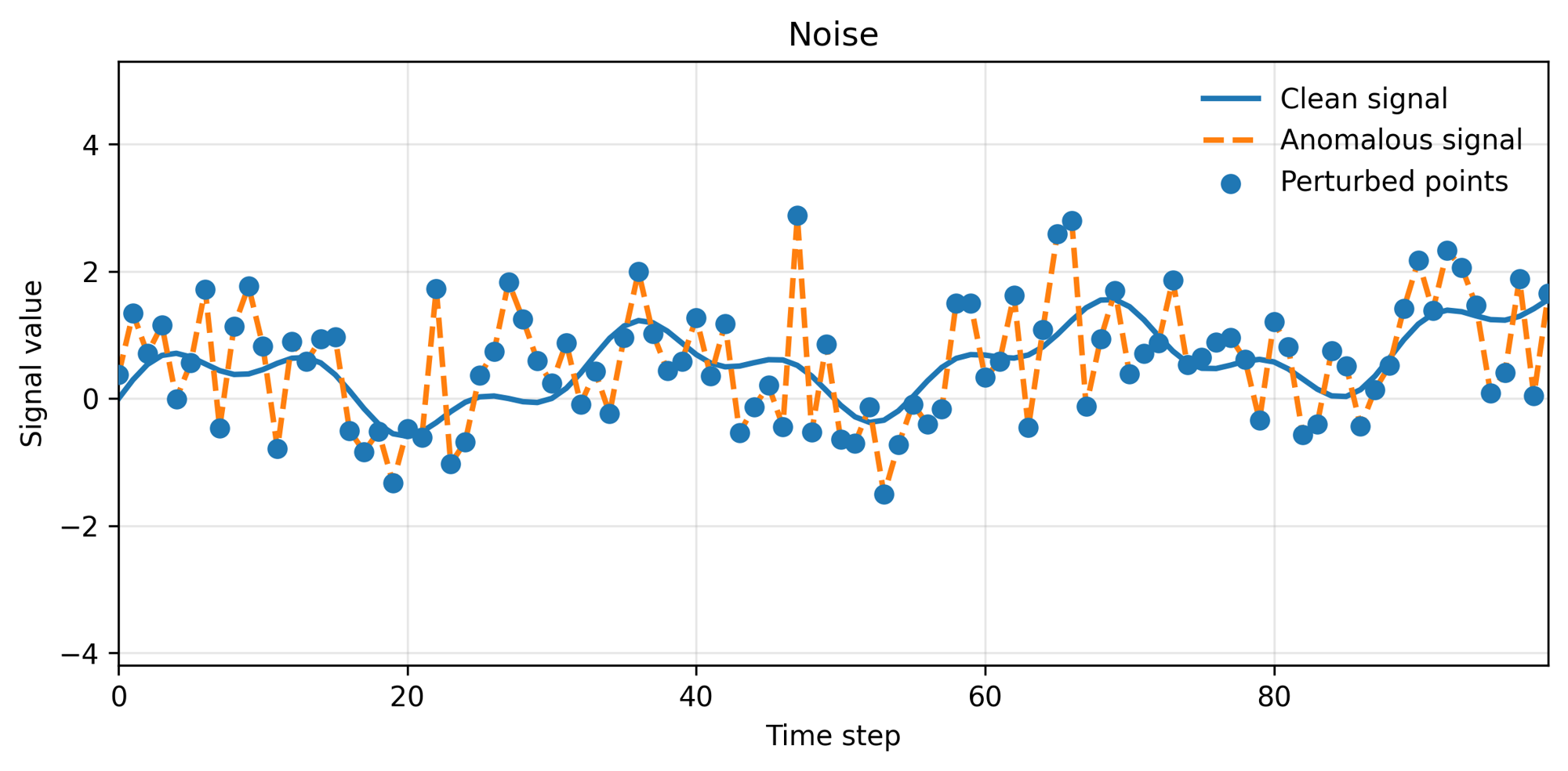}
        \caption{Noise anomaly}
        \label{fig:anomaly_noise}
    \end{subfigure}

    \vspace{0.4em}

    \begin{subfigure}[t]{0.32\textwidth}
        \centering
        \includegraphics[width=\textwidth]{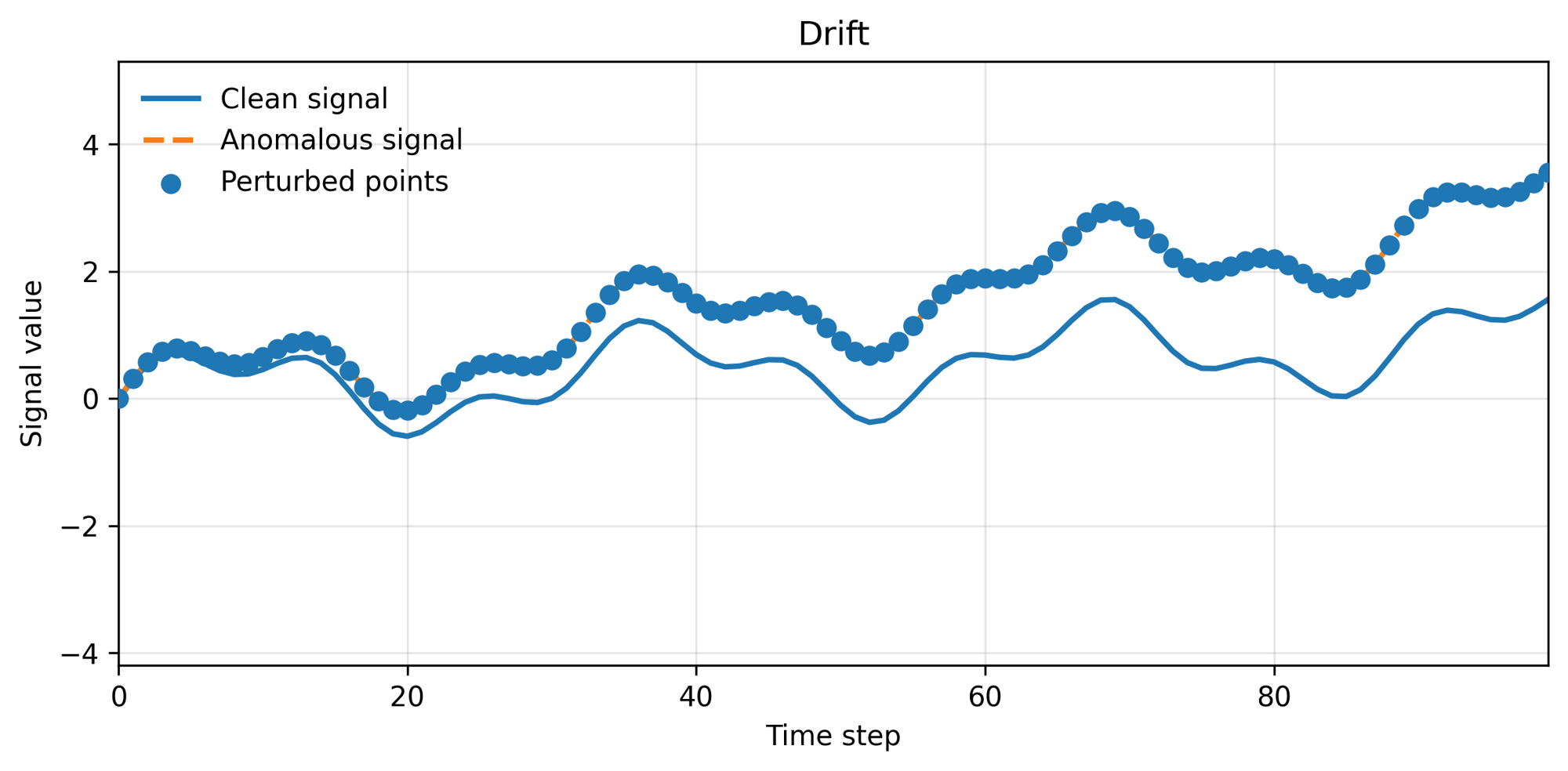}
        \caption{Drift anomaly}
        \label{fig:anomaly_drift}
    \end{subfigure}
    \hfill
    \begin{subfigure}[t]{0.32\textwidth}
        \centering
        \includegraphics[width=\textwidth]{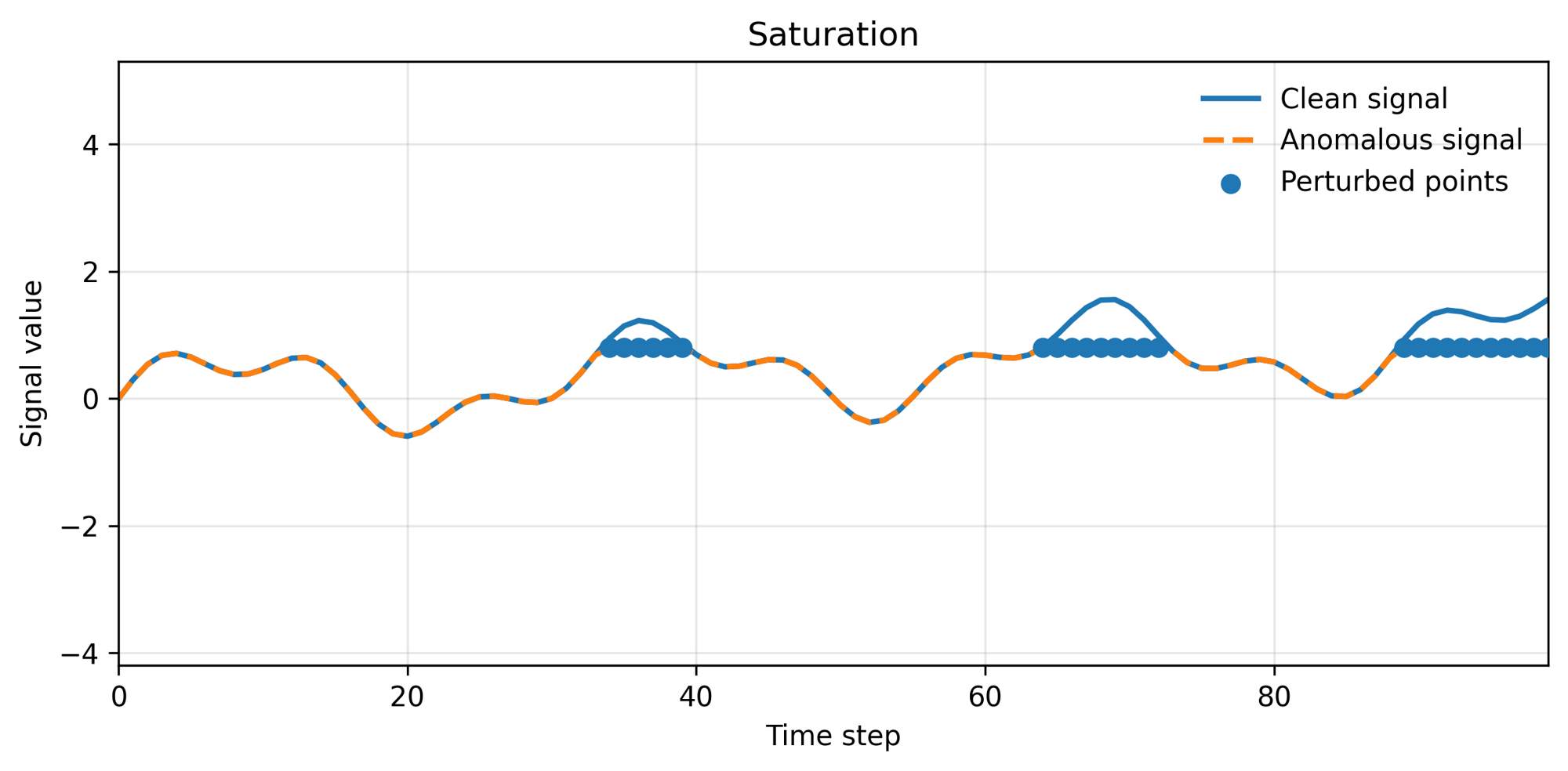}
        \caption{Saturation anomaly}
        \label{fig:anomaly_saturation}
    \end{subfigure}
    \hfill
    \begin{subfigure}[t]{0.32\textwidth}
        \centering
        \includegraphics[width=\textwidth]{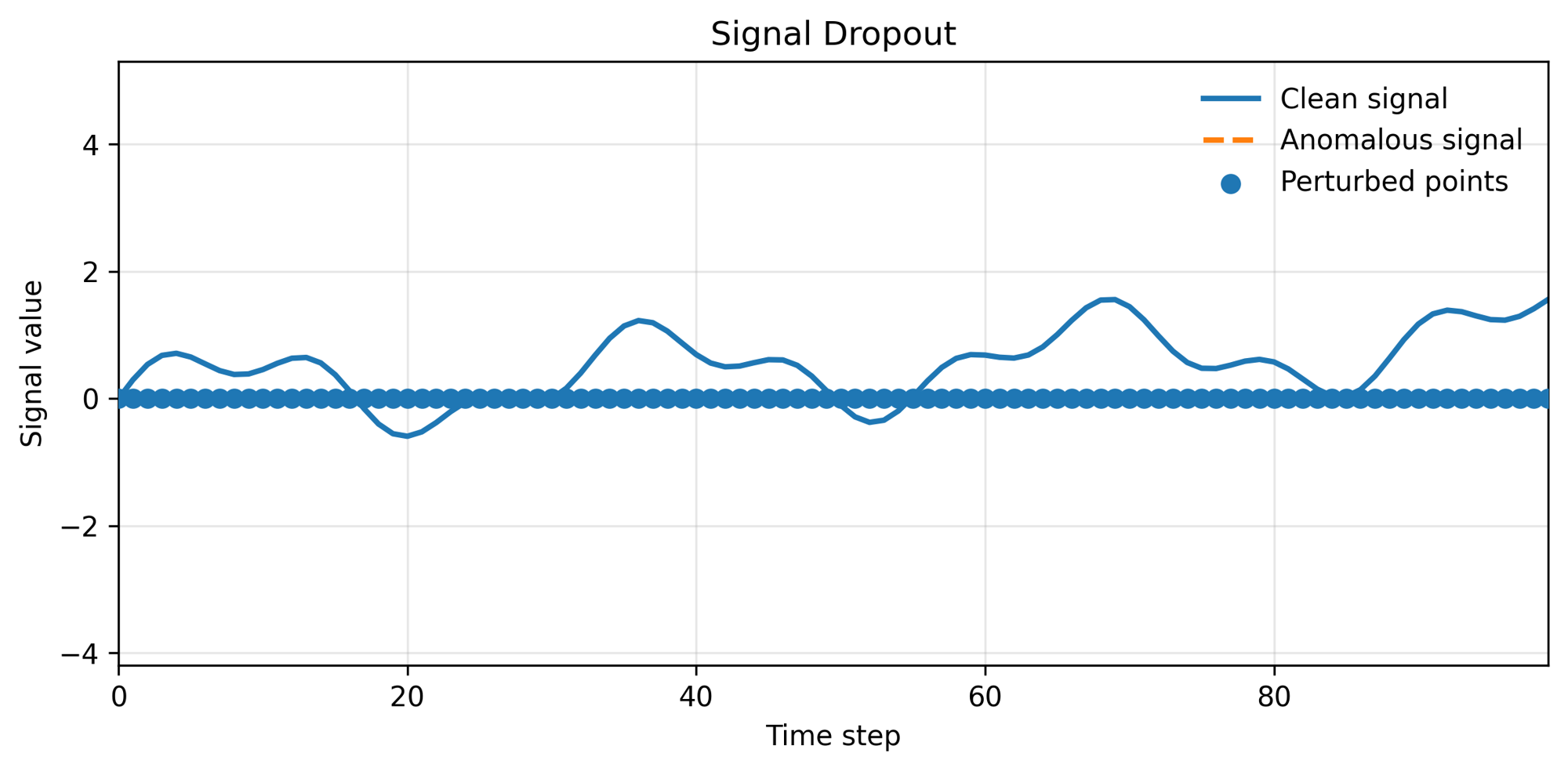}
        \caption{Signal Dropout anomaly}
        \label{fig:anomaly_dropout}
    \end{subfigure}

    \caption{Examples of synthetic anomaly types used in our experiments: spike, shift, noise, drift, saturation, and signal dropout. Each subplot compares the clean signal and the anomalous signal, while highlighting the perturbed points.}
    \label{fig:synthetic_anomaly_types}
\end{figure*}

% \section{Industrial Dataset Details Evaluation}

% This section describes the evaluation setting for the Paul Wurth industrial dataset. To assess the proposed method comprehensively, we consider both controlled and real anomaly scenarios. We first introduce synthetic anomaly injections used to stress-test the method under varying perturbation conditions with known affected sensors. We then present real test samples from the Paul Wurth dataset to evaluate the method on naturally occurring industrial anomalies.

\section{Industrial Dataset Details and Evaluation}

This section describes the evaluation setting for the Paul Wurth industrial dataset. To assess the proposed method comprehensively, we consider both controlled and real anomaly scenarios. We first introduce synthetic anomaly injections used to stress-test the method under varying perturbation conditions with known affected sensors. In addition, expert evaluation confirms the diagnostic utility of the proposed approach. Process engineers verified that the attribution heatmaps accurately captured the progression of thermal instabilities and identified the correct sensor groups in nearly all evaluated anomalies. This qualitative assessment highlights the practical relevance of the framework and supports its suitability for real-world deployment in complex metallurgical monitoring environments.

\subsection{Synthetic Anomaly Injection}

To rigorously evaluate the proposed explainable anomaly detection method, we inject synthetic anomalies into selected sensors of the Paul Wurth industrial time-series data. These perturbations are introduced at different time steps and with varying intensities, enabling a controlled stress test of the method under diverse anomaly conditions. Since the injected anomalies have known locations and affected sensors, they provide ground-truth root causes for assessing whether the proposed approach can accurately localize the underlying sources of anomalous behavior. The considered perturbations are designed to reflect a range of realistic failure patterns commonly encountered in industrial monitoring systems. Alongside the quantitative evaluation, we further provide representative feature plots as qualitative examples, with one plot for each synthetic anomaly type, to visualize the corresponding perturbation patterns. Specifically, we consider the following anomaly types, illustrated in Fig.~\ref{fig:synthetic_anomaly_types}:

\begin{itemize}
    \item \textbf{Spike} anomaly introduces large abnormal values at randomly selected time steps within a window, simulating sudden transient disturbances, impulsive faults, or brief sensor surges.

    \item \textbf{Shift} anomaly adds a constant offset to the entire window, thereby altering the baseline level of the signal throughout the affected segment. This reflects persistent calibration errors or operating-point shifts.

    \item \textbf{Noise} anomaly perturbs the full window with random fluctuations, modeling measurement corruption or stochastic disturbances affecting the signal over the entire interval.

    \item \textbf{Drift} anomaly evolves gradually over time, producing a slow increase or decrease in the signal baseline. This represents sensor aging, calibration drift, or slowly changing process conditions.

    \item \textbf{Signal Dropout} anomaly abruptly sets the entire window to zero, representing sudden sensor failure, communication loss, or complete signal interruption.

    \item \textbf{Saturation} anomaly clips the signal at an upper or lower bound, modeling actuator limits, sensor saturation, or hardware clipping.
\end{itemize}

% Representative feature plots corresponding to each synthetic anomaly type are provided to complement the quantitative evaluation with qualitative analysis.

Representative feature plots illustrating the different synthetic anomaly types are shown in Figures~\ref{fig:spike_anomaly}--\ref{fig:signal_dropout_anomaly}, complementing the quantitative evaluation with qualitative analysis.

\begin{figure*}[t]
    \centering
    \includegraphics[width=0.9\linewidth, height=0.85\textheight]{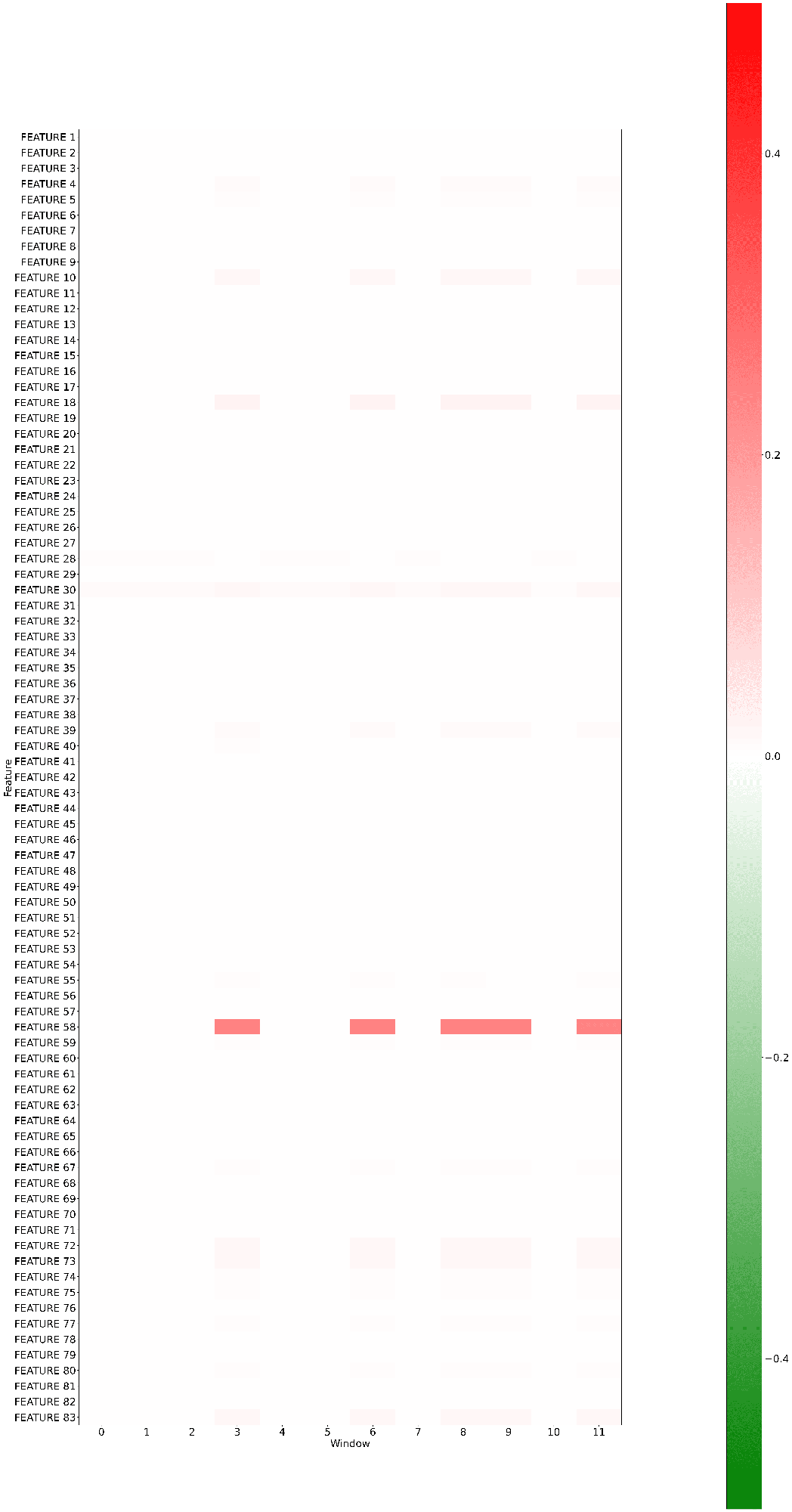}
    \caption{\textbf{Spike anomaly on Feature 58.} The figure shows a representative synthetic spike anomaly, where Feature 58 exhibits large abnormal values at randomly selected time steps, simulating a sudden transient disturbance or brief sensor surge.}
    \label{fig:spike_anomaly}
\end{figure*}
\clearpage

\begin{figure*}[t]
    \centering
    \includegraphics[width=0.9\linewidth, height=0.85\textheight]{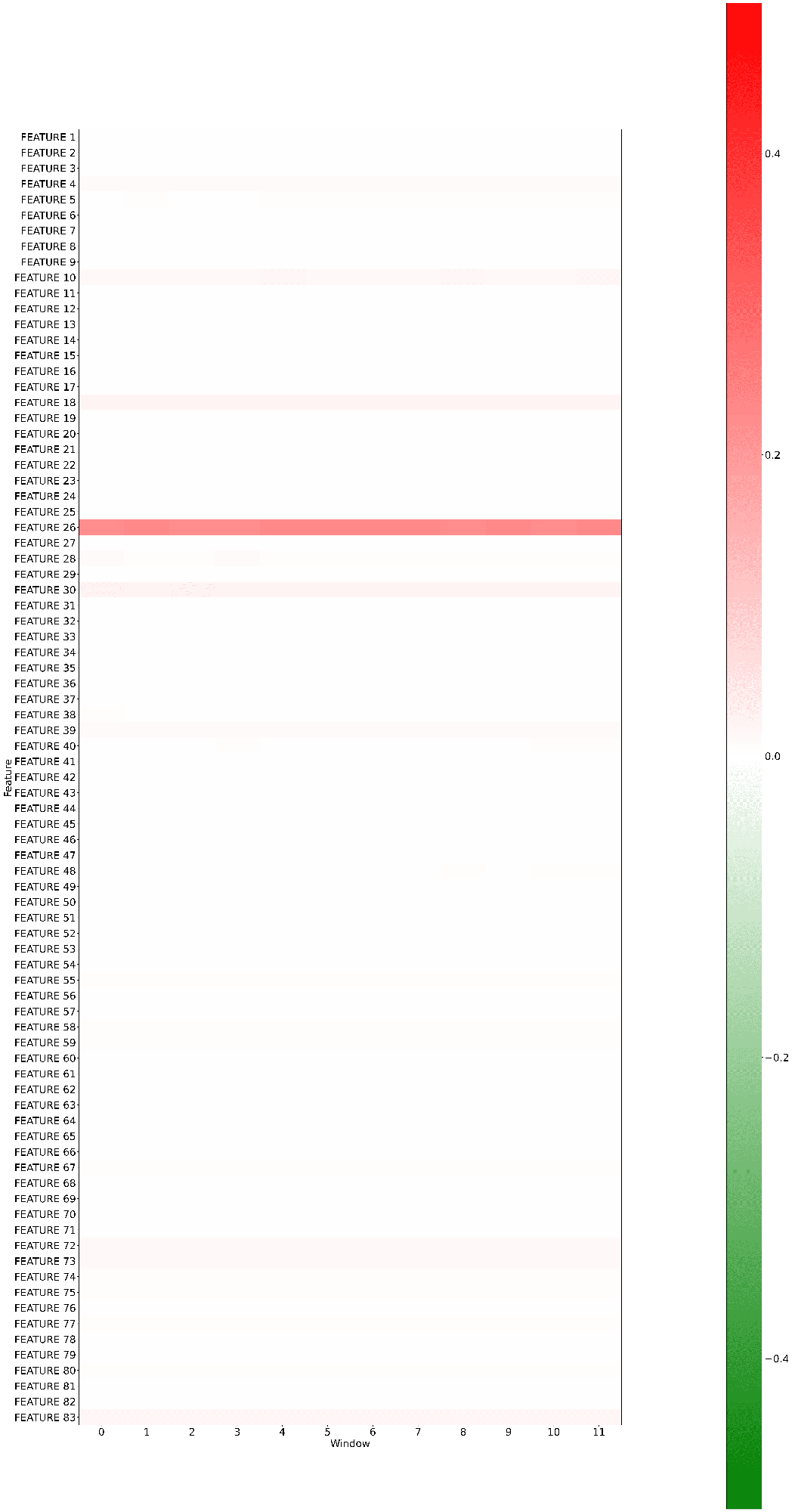}
    \caption{\textbf{Shift anomaly on Feature 26.} Feature 26 is affected by a synthetic shift anomaly, where a constant offset is added across the window, altering the baseline level of the signal.}
    \label{fig:shift_anomaly}
\end{figure*}
\clearpage

\begin{figure*}[t]
    \centering
    \includegraphics[width=0.9\linewidth, height=0.85\textheight]{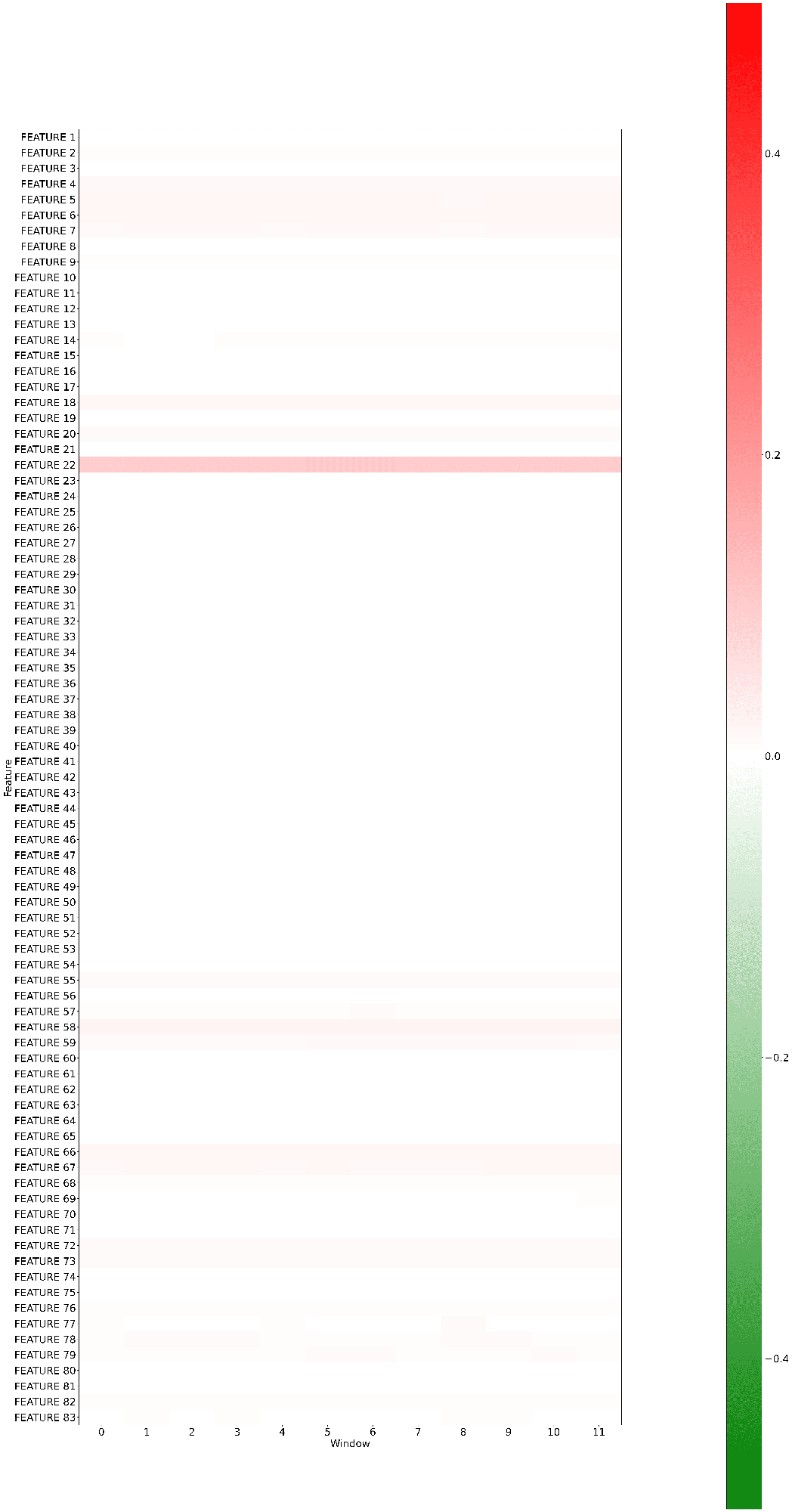}
    \caption{\textbf{Noise anomaly on Feature 22.} Feature 22 is perturbed by a mild synthetic noise anomaly, and the corresponding attribution pattern shows that the anomalous signal is still correctly detected despite the relatively small disturbance.}
    \label{fig:noise_anomaly}
\end{figure*}
\clearpage

\begin{figure*}[t]
    \centering
    \includegraphics[width=0.9\linewidth, height=0.85\textheight]{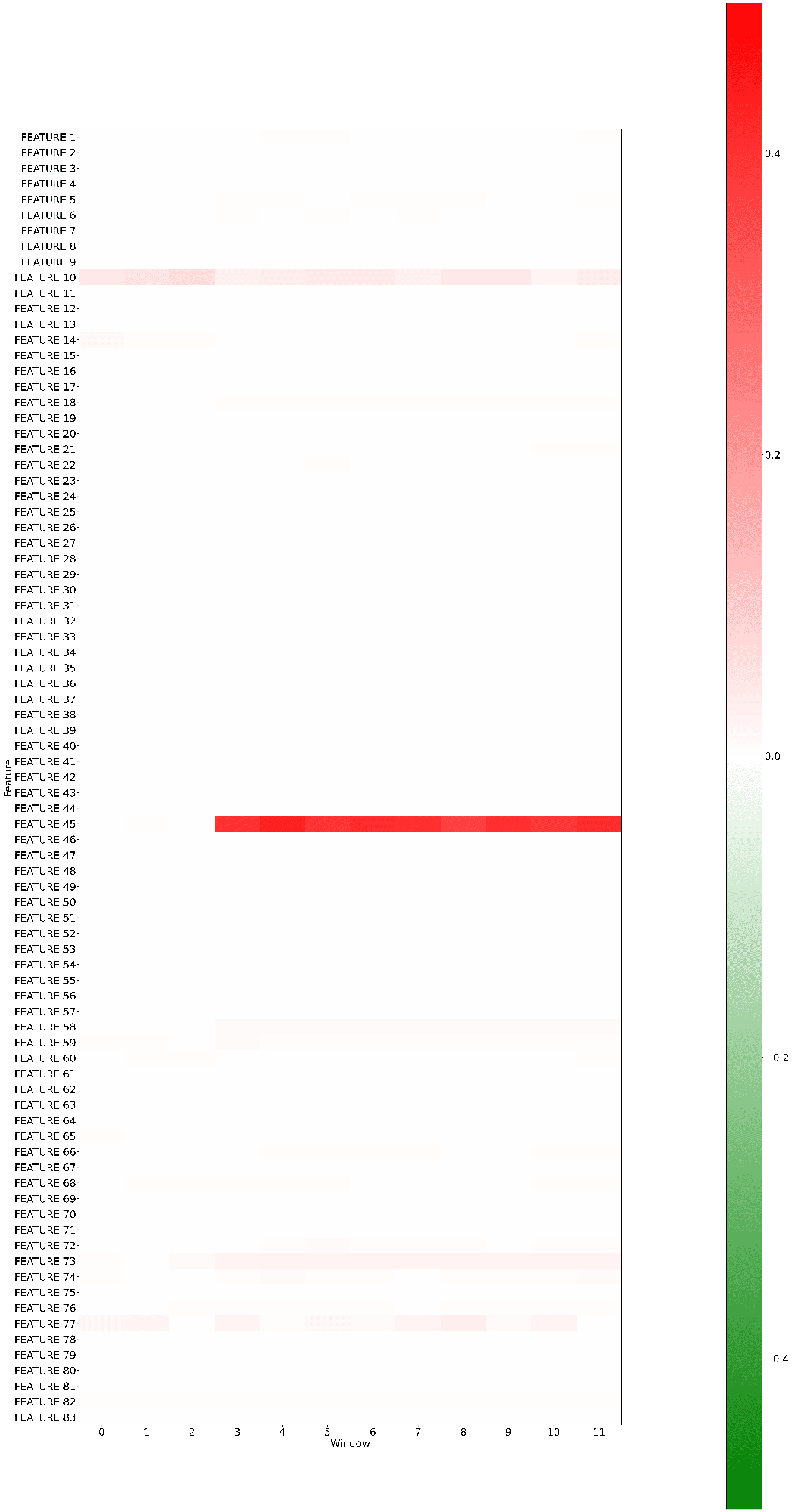}
    \caption{\textbf{Drift anomaly on Feature 45.} Feature 45 exhibits a gradual temporal change corresponding to a synthetic drift anomaly, which is correctly identified by the attribution method.}
    \label{fig:drift_anomaly}
\end{figure*}
\clearpage

\begin{figure*}[t]
    \centering
    \includegraphics[width=0.9\linewidth, height=0.85\textheight]{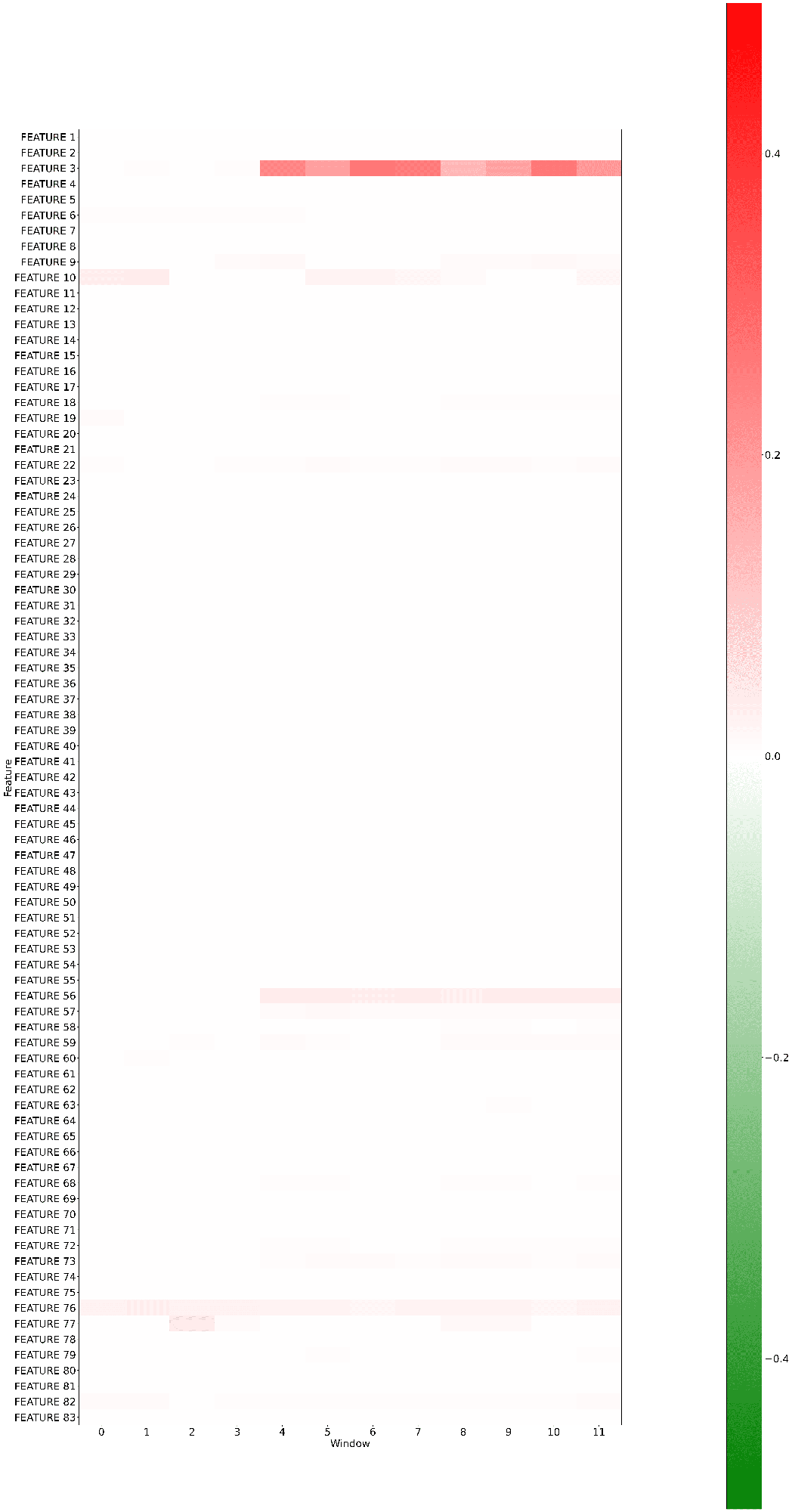}
    \caption{\textbf{Saturation anomaly on Feature 3.} Feature 3 exhibits a synthetic saturation anomaly, in which the signal is clipped at a limit, and the anomaly is correctly identified.}
    \label{fig:saturation_anomaly}
\end{figure*}
\clearpage

\begin{figure*}[t]
    \centering
    \includegraphics[width=0.9\linewidth, height=0.85\textheight]{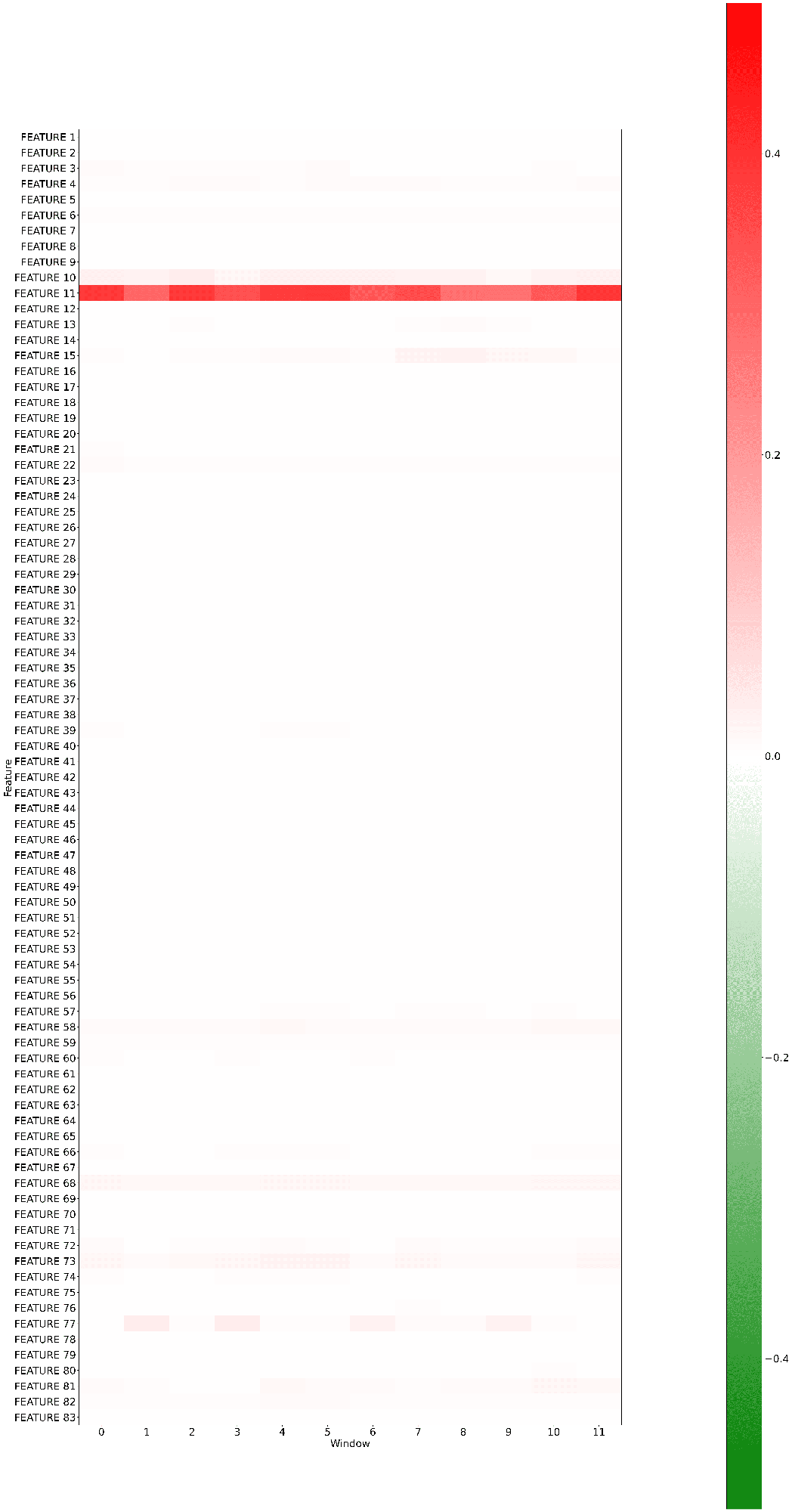}
    \caption{\textbf{Signal dropout anomaly on Feature 11.} Feature 11 is affected by a synthetic signal dropout anomaly, where the signal abruptly drops to zero over the affected interval, and the attribution pattern correctly highlights the anomalous feature.}
    \label{fig:signal_dropout_anomaly}
\end{figure*}
\clearpage

\section{Per-Attack Results on SWaT}

This section reports the complete per-attack results on the SWaT dataset for the considered ablation studies. Tables~\ref{tab:top3_results_blank}--\ref{tab:top10_results_blank} present the results for CondAttr-VAE, while Tables~\ref{tab:top3_results_blank_umap}--\ref{tab:top10_results_blank_umap} present the corresponding results for CondAttr-UMAP.

\begin{table}[!htbp]
\centering
\small
\setlength{\tabcolsep}{4pt}
\renewcommand{\arraystretch}{0.95}
\caption{\textbf{Per-attack performance of CondAttr-VAE on the SWaT dataset.} For each attack instance, we report Top@3R, CW-RCS@3, and TemporalHM@3, alongside the ground-truth sensors correctly identified within the Top-3.}
\begin{tabular}{lccccl}
\toprule
\textbf{Attack \#} & \textbf{Top@3R} & \textbf{CW@3} & \textbf{TempHM@3} & \textbf{FeatureID} \\
\midrule
1  & 1.000 & 0.145 & 0.104 & MV101 \\
2  & 1.000 & 0.121 & 0.545 & P102 \\
3  & 0.000 & 0.000 & 0.307 & --- \\
4  & 0.000 & 0.000 & 0.000 & --- \\
6  & 1.000 & 0.439 & 1.000 & AIT202 \\
7  & 1.000 & 0.888 & 1.000 & LIT301 \\
8  & 1.000 & 0.451 & 0.882 & DPIT301 \\
10 & 1.000 & 0.890 & 1.000 & FIT401 \\
11 & 1.000 & 0.104 & 0.705 & FIT401 \\
13 & 0.000 & 0.000 & 0.000 & --- \\
14 & 0.000 & 0.000 & 0.193 & --- \\
16 & 1.000 & 0.256 & 0.774 & LIT301 \\
17 & 0.000 & 0.000 & 0.000 & --- \\
19 & 0.000 & 0.000 & 0.000 & --- \\
20 & 1.000 & 0.925 & 1.000 & AIT504 \\
21 & 0.500 & 0.136 & 0.617 & LIT101 \\
22 & 0.333 & 0.031 & 0.363 & P501 \\
23 & 0.000 & 0.000 & 0.049 & --- \\
24 & 0.000 & 0.000 & 0.153 & --- \\
25 & 0.500 & 0.386 & 0.628 & LIT401 \\
26 & 0.000 & 0.000 & 0.101 & --- \\
27 & 0.500 & 0.000 & 0.435 & LIT401 \\
28 & 1.000 & 0.093 & 0.711 & P302 \\
29 & 0.000 & 0.000 & 0.000 & --- \\
30 & 0.000 & 0.000 & 0.337 & --- \\
31 & 1.000 & 0.407 & 0.444 & LIT401 \\
32 & 1.000 & 0.790 & 1.000 & LIT301 \\
33 & 1.000 & 0.127 & 0.722 & LIT101 \\
34 & 1.000 & 0.286 & 0.666 & P101 \\
35 & 0.000 & 0.000 & 0.105 & --- \\
36 & 0.000 & 0.000 & 0.716 & --- \\
37 & 0.000 & 0.000 & 0.105 & --- \\
38 & 1.000 & 0.688 & 1.000 & AIT502, AIT402 \\
39 & 0.500 & 0.487 & 0.666 & FIT401 \\
40 & 1.000 & 0.223 & 0.888 & FIT401 \\
41 & 1.000 & 0.946 & 0.876 & LIT301 \\
\bottomrule
\end{tabular}
\label{tab:top3_results_blank}
\end{table}

\begin{table}[t]
\centering
\small
\setlength{\tabcolsep}{4pt}
\renewcommand{\arraystretch}{1.0}
\caption{\textbf{Per-attack performance of CondAttr-VAE on the SWaT dataset.} For each attack instance, we report Top@5R, CW-RCS@5, and TemporalHM@5, alongside the ground-truth sensors correctly identified within the Top-5.}
\begin{tabular}{lccccl}
\toprule
\textbf{Attack \#} & \textbf{Top@5R} & \textbf{CW@5} & \textbf{TempHM@5} & \textbf{FeatureID} \\
\midrule
1  & 1.000 & 0.145 & 0.104 & MV101 \\
2  & 1.000 & 0.121 & 0.666 & P102 \\
3  & 0.000 & 0.000 & 0.566 & --- \\
4  & 0.000 & 0.000 & 0.000 & --- \\
6  & 1.000 & 0.439 & 1.000 & AIT202 \\
7  & 1.000 & 0.888 & 1.000 & LIT301 \\
8  & 1.000 & 0.451 & 1.000 & DPIT301 \\
10 & 1.000 & 0.890 & 1.000 & FIT401 \\
11 & 1.000 & 0.104 & 0.777 & FIT401 \\
13 & 0.000 & 0.000 & 0.000 & --- \\
14 & 0.000 & 0.000 & 0.193 & --- \\
16 & 1.000 & 0.256 & 1.000 & LIT301 \\
17 & 0.000 & 0.000 & 0.000 & --- \\
19 & 0.000 & 0.000 & 0.000 & --- \\
20 & 1.000 & 0.925 & 1.000 & AIT504 \\
21 & 0.500 & 0.136 & 0.666 & LIT101 \\
22 & 0.667 & 0.123 & 0.500 & P501, UV401 \\
23 & 0.333 & 0.002 & 0.049 & --- \\
24 & 0.500 & 0.037 & 0.153 & P203 \\
25 & 0.500 & 0.386 & 0.628 & LIT401 \\
26 & 0.000 & 0.000 & 0.301 & --- \\
27 & 0.500 & 0.000 & 0.462 & LIT401 \\
28 & 1.000 & 0.093 & 0.915 & P302 \\
29 & 0.000 & 0.000 & 0.000 & --- \\
30 & 0.667 & 0.152 & 0.660 & P101, MV201 \\
31 & 1.000 & 0.407 & 0.444 & LIT401 \\
32 & 1.000 & 0.790 & 1.000 & LIT301 \\
33 & 1.000 & 0.127 & 0.866 & LIT101 \\
34 & 1.000 & 0.286 & 0.666 & P101 \\
35 & 0.500 & 0.013 & 0.105 & P101 \\
36 & 0.000 & 0.000 & 0.783 & --- \\
37 & 0.000 & 0.000 & 0.200 & --- \\
38 & 1.000 & 0.688 & 1.000 & AIT502, AIT402 \\
39 & 0.500 & 0.487 & 0.666 & FIT401 \\
40 & 1.000 & 0.223 & 1.000 & FIT401 \\
41 & 1.000 & 0.946 & 0.876 & LIT301 \\
\bottomrule
\end{tabular}
\label{tab:top5_results_blank}
\end{table}

\begin{table}[t]
\centering
\small
\setlength{\tabcolsep}{4pt}
\renewcommand{\arraystretch}{1.0}
\caption{\textbf{Per-attack performance of CondAttr-VAE on the SWaT dataset.} For each attack instance, we report Top@10R, CW-RCS@10, and TemporalHM@10, alongside the ground-truth sensors correctly identified within the Top-10.}
\begin{tabular}{lccccl}
\toprule
\textbf{Attack \#} & \textbf{Top@10R} & \textbf{CW@10} & \textbf{TempHM@10} & \textbf{FeatureID} \\
\midrule
1  & 1.000 & 0.145 & 0.104 & MV101 \\
2  & 1.000 & 0.121 & 0.769 & P102 \\
3  & 1.000 & 0.027 & 0.566 & --- \\
4  & 0.000 & 0.000 & 0.000 & --- \\
6  & 1.000 & 0.439 & 1.000 & AIT202 \\
7  & 1.000 & 0.888 & 1.000 & LIT301 \\
8  & 1.000 & 0.451 & 1.000 & DPIT301 \\
10 & 1.000 & 0.890 & 1.000 & FIT401 \\
11 & 1.000 & 0.104 & 1.000 & FIT401 \\
13 & 0.000 & 0.000 & 0.000 & --- \\
14 & 0.000 & 0.000 & 0.193 & --- \\
16 & 1.000 & 0.256 & 1.000 & LIT301 \\
19 & 1.000 & 0.010 & 1.000 & --- \\
20 & 1.000 & 0.925 & 1.000 & AIT504 \\
21 & 0.500 & 0.136 & 0.666 & LIT101 \\
22 & 0.667 & 0.123 & 0.540 & P501, UV401 \\
23 & 0.333 & 0.002 & 0.049 & --- \\
24 & 0.500 & 0.037 & 0.400 & P203 \\
25 & 0.500 & 0.386 & 0.628 & LIT401 \\
26 & 0.500 & 0.036 & 0.564 & --- \\
27 & 0.500 & 0.000 & 0.462 & LIT401 \\
28 & 1.000 & 0.093 & 0.976 & P302 \\
29 & 0.000 & 0.000 & 0.000 & --- \\
30 & 1.000 & 0.240 & 0.946 & P101, MV201 \\
31 & 1.000 & 0.407 & 0.444 & LIT401 \\
32 & 1.000 & 0.790 & 1.000 & LIT301 \\
33 & 1.000 & 0.127 & 0.866 & LIT101 \\
34 & 1.000 & 0.286 & 1.000 & P101 \\
35 & 0.500 & 0.013 & 0.285 & P101 \\
36 & 0.000 & 0.000 & 0.783 & --- \\
37 & 0.500 & 0.011 & 0.285 & P501 \\
38 & 1.000 & 0.688 & 1.000 & AIT502, AIT402 \\
39 & 0.500 & 0.487 & 0.769 & FIT401 \\
40 & 1.000 & 0.223 & 1.000 & FIT401 \\
41 & 1.000 & 0.946 & 0.876 & LIT301 \\
\bottomrule
\end{tabular}
\label{tab:top10_results_blank}
\end{table}

\begin{table}[t]
\centering
\small
\setlength{\tabcolsep}{4pt}
\renewcommand{\arraystretch}{1.0}
\caption{\textbf{Per-attack performance of CondAttr-UMAP on the SWaT dataset.} For each attack instance, we report Top@3R, CW-RCS@3, and TemporalHM@3, alongside the ground-truth sensors correctly identified within the Top-3.}
\begin{tabular}{lccccl}
\toprule
\textbf{Attack \#} & \textbf{Top@3R} & \textbf{CW@3} & \textbf{TempHM@3} & \textbf{FeatureID} \\
\midrule
1  & 0.000 & 0.000 & 0.000 & --- \\
2  & 1.000 & 0.300 & 0.933 & P102 \\
3  & 0.000 & 0.000 & 0.600 & --- \\
4  & 0.000 & 0.000 & 0.000 & --- \\
6  & 1.000 & 0.443 & 1.000 & AIT202 \\
7  & 1.000 & 0.911 & 1.000 & LIT301 \\
8  & 1.000 & 0.375 & 0.882 & DPIT301 \\
10 & 1.000 & 0.851 & 1.000 & FIT401 \\
11 & 1.000 & 0.118 & 0.533 & FIT401 \\
13 & 0.000 & 0.000 & 0.000 & --- \\
14 & 0.000 & 0.000 & 0.000 & --- \\
16 & 0.000 & 0.000 & 0.774 & --- \\
17 & 0.000 & 0.000 & 0.000 & --- \\
19 & 0.000 & 0.000 & 0.000 & --- \\
20 & 1.000 & 0.999 & 1.000 & AIT504 \\
21 & 0.500 & 0.151 & 0.666 & LIT101 \\
22 & 0.667 & 0.217 & 0.540 & UV401, P501 \\
23 & 0.333 & 0.169 & 0.500 & MV302 \\
24 & 0.500 & 0.068 & 0.153 & P205 \\
25 & 0.500 & 0.088 & 0.545 & LIT401 \\
26 & 0.500 & 0.019 & 0.403 & LIT301 \\
27 & 0.500 & 0.000 & 0.420 & LIT401 \\
28 & 0.000 & 0.000 & 0.000 & --- \\
29 & 0.000 & 0.000 & 0.000 & --- \\
30 & 0.333 & 0.002 & 0.180 & P101 \\
31 & 1.000 & 0.407 & 0.923 & LIT401 \\
32 & 1.000 & 0.931 & 0.912 & LIT301 \\
33 & 0.000 & 0.000 & 0.452 & --- \\
34 & 1.000 & 0.342 & 0.666 & P101 \\
35 & 0.000 & 0.000 & 0.105 & --- \\
36 & 0.000 & 0.000 & 0.716 & --- \\
37 & 0.000 & 0.000 & 0.000 & --- \\
38 & 1.000 & 0.803 & 1.000 & AIT502, AIT402 \\
39 & 0.500 & 0.439 & 0.720 & FIT401 \\
40 & 1.000 & 0.167 & 0.571 & FIT401 \\
41 & 1.000 & 0.956 & 0.907 & LIT301 \\
\bottomrule
\end{tabular}
\label{tab:top3_results_blank_umap}
\end{table}

\begin{table}[t]
\centering
\small
\setlength{\tabcolsep}{4pt}
\renewcommand{\arraystretch}{1.0}
\caption{\textbf{Per-attack performance of CondAttr-UMAP on the SWaT dataset.} For each attack instance, we report Top@5R, CW-RCS@5, and TemporalHM@5, alongside the ground-truth sensors correctly identified within the Top-5.}
\begin{tabular}{lccccl}
\toprule
\textbf{Attack \#} & \textbf{Top@5R} & \textbf{CW@5} & \textbf{TempHM@5} & \textbf{FeatureID} \\
\midrule
1  & 0.000 & 0.000 & 0.000 & --- \\
2  & 1.000 & 0.300 & 1.000 & P102 \\
3  & 0.000 & 0.000 & 0.600 & --- \\
4  & 0.000 & 0.000 & 0.000 & --- \\
6  & 1.000 & 0.443 & 1.000 & AIT202 \\
7  & 1.000 & 0.911 & 1.000 & LIT301 \\
8  & 1.000 & 0.375 & 1.000 & DPIT301 \\
10 & 1.000 & 0.851 & 1.000 & FIT401 \\
11 & 1.000 & 0.118 & 0.900 & FIT401 \\
13 & 0.000 & 0.000 & 0.000 & --- \\
14 & 0.000 & 0.000 & 0.000 & --- \\
16 & 1.000 & 0.063 & 0.774 & LIT301 \\
17 & 0.000 & 0.000 & 0.000 & MV303  \\
19 & 0.000 & 0.000 & 0.000 & AIT504  \\
20 & 1.000 & 0.999 & 1.000 & AIT504 \\
21 & 0.500 & 0.151 & 0.666 & LIT101 \\
22 & 0.667 & 0.217 & 0.744 & UV401, P501 \\
23 & 0.333 & 0.169 & 0.500 & MV302,  DPIT301\\
24 & 1.000 & 0.245 & 0.285 & P205, P203 \\
25 & 0.500 & 0.088 & 0.545 & LIT401 \\
26 & 0.500 & 0.019 & 0.501 & LIT301 \\
27 & 0.500 & 0.000 & 0.462 & LIT401 \\
28 & 0.000 & 0.000 & 0.000 & --- \\
29 & 0.000 & 0.000 & 0.000 & --- \\
30 & 0.333 & 0.002 & 0.266 & P101, LIT101 \\
31 & 1.000 & 0.407 & 0.923 & LIT401 \\
32 & 1.000 & 0.931 & 1.000 & LIT301 \\
33 & 0.000 & 0.000 & 0.731 & --- \\
34 & 1.000 & 0.342 & 1.000 & P101 \\
35 & 0.000 & 0.000 & 0.105 & --- \\
36 & 0.000 & 0.000 & 0.716 & --- \\
37 & 0.000 & 0.000 & 0.104 & --- \\
38 & 1.000 & 0.803 & 1.000 & AIT502, AIT402 \\
39 & 0.500 & 0.439 & 0.720 & FIT401 \\
40 & 1.000 & 0.167 & 1.000 & FIT401 \\
41 & 1.000 & 0.956 & 0.907 & LIT301 \\
\bottomrule
\end{tabular}
\label{tab:top5_results_blank_umap}
\end{table}

\begin{table}[t]
\centering
\small
\setlength{\tabcolsep}{4pt}
\renewcommand{\arraystretch}{1.0}
\caption{\textbf{Per-attack performance of CondAttr-UMAP on the SWaT dataset.} For each attack instance, we report Top@10R, CW-RCS@10, and TemporalHM@10, alongside the ground-truth sensors correctly identified within the Top-10.}
\begin{tabular}{lccccl}
\toprule
\textbf{Attack \#} & \textbf{Top@10R} & \textbf{CW@10} & \textbf{TempHM@10} & \textbf{FeatureID} \\
\midrule
1  & 0.000 & 0.000 & 0.000 & --- \\
2  & 1.000 & 0.300 & 1.000 & P102 \\
3  & 0.000 & 0.000 & 0.600 & --- \\
4  & 0.000 & 0.000 & 0.000 & --- \\
6  & 1.000 & 0.443 & 1.000 & AIT202 \\
7  & 1.000 & 0.911 & 1.000 & LIT301 \\
8  & 1.000 & 0.375 & 1.000 & DPIT301 \\
10 & 1.000 & 0.851 & 1.000 & FIT401 \\
11 & 1.000 & 0.118 & 1.000 & FIT401 \\
13 & 0.000 & 0.000 & 0.000 & --- \\
14 & 0.000 & 0.000 & 0.000 & --- \\
16 & 1.000 & 0.063 & 1.000 & LIT301 \\
17 & 1.000 & 0.000 & 0.000 & --- \\
19 & 1.000 & 0.008 & 0.774 & --- \\
20 & 1.000 & 0.999 & 1.000 & AIT504 \\
21 & 0.500 & 0.151 & 0.666 & LIT101 \\
22 & 0.667 & 0.217 & 0.875 & UV401, P501 \\
23 & 0.667 & 0.339 & 0.528 & MV302 \\
24 & 1.000 & 0.245 & 0.285 & P205, P203 \\
25 & 0.500 & 0.088 & 0.628 & LIT401 \\
26 & 0.500 & 0.019 & 0.634 & LIT301 \\
27 & 0.500 & 0.000 & 0.462 & LIT401 \\
28 & 0.000 & 0.000 & 0.002 & --- \\
29 & 0.000 & 0.000 & 0.000 & --- \\
30 & 0.667 & 0.006 & 0.458 & P101 \\
31 & 1.000 & 0.407 & 0.923 & LIT401 \\
32 & 1.000 & 0.931 & 1.000 & LIT301 \\
33 & 1.000 & 0.073 & 1.000 & --- \\
34 & 1.000 & 0.342 & 1.000 & P101 \\
35 & 0.500 & 0.011 & 0.285 & P101 \\
36 & 0.000 & 0.000 & 0.716 & --- \\
37 & 0.000 & 0.000 & 0.200 & --- \\
38 & 1.000 & 0.803 & 1.000 & AIT502, AIT402 \\
39 & 0.500 & 0.439 & 0.720 & FIT401 \\
40 & 1.000 & 0.167 & 1.000 & FIT401 \\
41 & 1.000 & 0.956 & 0.907 & LIT301 \\
\bottomrule
\end{tabular}
\label{tab:top10_results_blank_umap}
\end{table}
% \bibliographystyle{splncs04}
% \bibliography{mybibliography}
%\end{document}

\end{document}